\DeclareMathOperator{\diag}{diag}	
\DeclareMathOperator{\relu}{ReLU}	
\DeclareMathOperator{\tr}{tr}	    
\DeclareMathOperator{\vecc}{vec}	    
\DeclareMathOperator{\idx}{idx}	    
\DeclareMathOperator*{\argmax}{arg\,max}        
\pgfplotsset{compat=1.15}
\theoremstyle{plain}
\newtheorem{theorem}{Theorem}
\newtheorem{proposition}{Proposition}
\newtheorem{corollary}{Corollary}
\newtheorem{lemma}{Lemma}
\theoremstyle{definition}
\theoremstyle{remark}
\def\BibTeX{{\rm B\kern-.05em{\sc i\kern-.025em b}\kern-.08em
    T\kern-.1667em\lower.7ex\hbox{E}\kern-.125emX}}
\begin{document}

\title{A Sequential Framework Towards an Exact SDP Verification of Neural Networks\\
\thanks{This work was supported by grants from AFOSR, ONR and NSF.}
}

\author{\IEEEauthorblockN{1\textsuperscript{st} Ziye Ma}
\IEEEauthorblockA{\textit{Electrical Engineering and Computer Sciences} \\
\textit{UC Berkeley}\\
Berkeley, CA, USA \\
ziyema@berkeley.edu}
\and
\IEEEauthorblockN{2\textsuperscript{nd} Somayeh Sojoudi}
\IEEEauthorblockA{\textit{Electrical Engineering and Computer Sciences} \\
\textit{Mechanical Engineering} \\
\textit{UC Berkeley}\\
Berkeley, CA, USA \\
sojoudi@berkeley.edu}}


\maketitle

\begin{abstract}
Although neural networks have been applied to several systems in recent years, they still cannot be used in safety-critical systems due to the lack of efficient techniques to certify their robustness.  A number of techniques based on convex optimization have been proposed in the literature to study the robustness of neural networks, and the semidefinite programming (SDP) approach has emerged as a leading contender for the robust certification of neural networks. The major challenge to the SDP approach is that it is prone to a large relaxation gap. In this work, we address this issue by developing a sequential framework to shrink this gap to zero by adding non-convex cuts to the optimization problem via disjunctive programming. We analyze the performance of this sequential SDP method both theoretically and empirically, and show that it bridges the gap as the number of cuts increases.
\end{abstract}

\begin{IEEEkeywords}
neural networks, robustness, safety, convex optimization, semidefinite programming, disjunctive programming
\end{IEEEkeywords}

\section{Introduction}

The nonlinearity of activation functions in neural networks is the key enabler of making neural networks act as universal function approximators, offering great expressive power \cite{sonoda2017neural}. However, this also poses major challenges for the verification against adversarial attacks, due to the non-convexity that activation functions induce. To circumvent the problem, several techniques based on convex relaxations of ReLU constraints have been proposed. In particular, Linear Programming (LP) relaxation \cite{wong2017provable}, Semidefinite Programming (SDP) relaxation \cite{raghunathan2018semidefinite}, and mixed-integer linear programming relaxation \cite{tjeng2017evaluating} \cite{anderson2020tightened,anderson2021partition} have been proposed. LP-based techniques relax each ReLU function individually, thus introducing a relatively large relaxation gap. Although some recent works, such as \cite{singh2019beyond}, have developed k-ReLU relaxations to consider multiple ReLU relaxations jointly, the relaxation gap is still large. On the contrary, SDP-based relaxations naturally couple ReLU relaxations together without any additional effort via a semidefinite constraint. Therefore, as the number of hidden layers of the network under verification grows, the relative reduction in the relaxation gap also grows when compared to LP-based methods. However, even with the power of SDP relaxation, the relaxation gap is still significant under most settings. In this work, we address the issue with the SDP relaxation and shrink the relaxation gap.

The contribution of this paper is four-fold:
\begin{itemize}[noitemsep,topsep=0pt]
	\item A technique to introduce non-convex cuts into the SDP relaxation via secant approximation of non-convex constraints;
	\item An iterative algorithm that certifies a given network with nonnegative reduction in relaxation gap every step until a certificate can be produced;
	\item Theoretical and empirical analyses of the efficacy of several other cut-based techniques for comparison purposes;
	\item Geometrical analysis of the proposed technique using non-convex cuts, to provide insights into the current approach and future improvements.
\end{itemize}

\subsection{Notations}

We denote the set of real-valued $m \times m$ symmetric matrices as $\mathbb{S}^m$, and the notation $X \succeq 0$ means that X is a symmetric positive semidefinite matrix. "$\cdot$" denotes the usual vector dot product. The Hadamard (element-wise) product between $X$ and $Y$ is denoted as $X\odot Y$. Operator $\diag(\cdot)$ coverts its vector argument to a diagonal matrix. $\idx(A,a)$ denotes the position/index of element $a$ in vector or matrix $A$ (e.g., $\idx([1,2,3],3)=3$). $\dagger$ denotes the Penrose-Moore generalized inverse, and $\langle A,B \rangle$ denotes $\tr(A^\top B)$ for square matrices $A,B$.

\section{Problem Statement}
\label{sec:problem_statement}
Consider a $K$-layer ReLU neural network defined by
\begin{equation}
\begin{aligned}
x^{[0]} ={}& x, \ \ x^{[k]} = \relu(W^{[k-1]}x^{[k-1]})
\end{aligned} \label{eqn: network_description}
\end{equation}
for all $k\in\{1,2,\dots,K\}$, where $x\in\mathbb{R}^{n_x}$ is the input to the neural network, $z\triangleq x^{[K]} \in\mathbb{R}^{n_z}$ is the output, and $\hat{x}^{[k]} = W^{[k-1]}x^{[k-1]}+b^{[k-1]}\in\mathbb{R}^{n_k}$ is the preactivation of the $k^{\text{th}}$ layer. The parameters $W^{[k]}\in\mathbb{R}^{n_{k+1}\times n_k}$ and $b^{[k]}\in\mathbb{R}^{n_{k+1}}$ are the weight matrix and bias vector applied to the $k^{\text{th}}$ layer's activation $x^{[k]}\in\mathbb{R}^{n_k}$, respectively. Without loss of generality, assume that the bias terms are accounted for in the activations $x^{[k]}$, thereby setting $b^{[k]}=0$ for all layers $k$. Let the function $f\colon \mathbb{R}^{n_x}\to\mathbb{R}^{n_z}$ denote the mapping $x\mapsto z$ defined by (\ref{eqn: network_description}).

Following the spirits of \cite{wong2017provable,raghunathan2018semidefinite}, define the input uncertainty set $\mathcal{X}\subseteq\mathbb{R}^{n_x}$ using $l_{\infty}$ norms: $\mathcal{X} = \{x\in\mathbb{R}^{n_x} : \|x - \bar{x}\|_\infty \le \epsilon\}$. Similarly, define $\mathcal{S}\subseteq\mathbb{R}^{n_z}$ as the \emph{safe set}. For classification networks, safe sets are (possibly unbounded) usually polyhedral sets defined as the intersection of a finite number of half-spaces:
\begin{equation*}
\mathcal{S} = \{z\in\mathbb{R}^{n_z} : C z \le 0\},
\end{equation*}
where $C\in\mathbb{R}^{n_\mathcal{S}\times n_z}$ is given. Any output $z \in \mathcal{S}$ is said to be $\emph{safe}$. The safe set is assumed to be polyhedral in the rest of this paper.

The goal of certification is to ensure that $f(x) \in \mathcal{S}$ for all $x \in \mathcal{X}$, which is equivalent to checking the satisfaction of the following inequality:
\begin{equation}
\begin{aligned}
\max_{i \in \{1, \dots ,n_{\mathcal{S}}\}} & \{f_i^*(\mathcal{X})\} \leq 0
\end{aligned} \label{eq: robustness_certification_problem}
\end{equation}
where $f_i^*(\mathcal{X}) = \sup\{c_i^\top z : z=f(x), ~ x\in\mathcal{X} \}$. This is a non-convex optimization problem, since the constraint $z=f(x)$ is nonlinear (activation functions are designed to make $f$ nonlinear).

\subsection{SDP Relaxation}

In this section, we present the SDP relaxation used for robustness certification. The details can be found in \cite{raghunathan2018semidefinite}. The main idea is to convert the ReLU constraints to quadratic constraints and then reformulate the non-convex certification problem (\ref{eq: robustness_certification_problem}) as a quadratically-constrained quadratic program (QCQP). Then, the standard SDP relaxation of the resulting QCQP leads to checking whether the following inequality is satisfied:
\begin{equation}
\begin{aligned}
\max_{i \in \{1, \dots ,n_{\mathcal{S}}\}} & \{\hat f_i^*(\mathcal{X})\} \leq 0
\end{aligned} \label{eq: sdp_relaxed_certification_problem}
\end{equation}
where $\hat f_i^*(\mathcal{X}) = \sup\{c_i^\top z : (x,z) \in\mathcal{N}_\text{SDP}, ~ x\in\mathcal{X} \}$. The notation $(x,z) \in\mathcal{N}_\text{SDP}$ means that there exist $\tilde x \in \mathbb{R}^{n_{\tilde x}} \ \text{and} ~  \tilde X \in \mathbb{S}^{n_{\tilde x}}$ with $ z \triangleq  \tilde x[x^{[K]}]$ such that
\begin{equation}
\begin{aligned}
 \tilde x[x^{[0]}] = x, \ \tilde X \succeq \tilde x \tilde x^\top, \ (\tilde X, \tilde x) \in \mathcal{N}^{[k]}_{\text{SDP}} ~ \forall k = \{1\dots K\} 
\end{aligned}
\label{eqn:sdp-relaxation}
\end{equation}
where the membership condition $(\tilde X,\tilde x)\in \mathcal{N}^{[k]}_\text{SDP}$ is defined by the following conditions:
\begin{equation}
	\begin{aligned}
			&\tilde x[x^{[k]}] \geq 0,\\
			&\tilde x[x^{[k]}] \geq W^{[k-1]}\tilde x[x^{[k-1]}], \\
			&\diag(\tilde X[x^{[k]}(x^{[k]})^T]) = \diag(W\tilde X[x^{[k-1]}(x^{[k]})^T]), \\
			&\diag(\tilde X[x^{[k-1]}(x^{[k-1]})^T]) \leq (l^{[k-1]}+u^{[k-1]}) \odot\\
			& \tilde X[x^{[k-1]}] -l^{[k-1]} \odot u^{[k-1]}, \\
	\end{aligned}
	\label{eq: one_layer_relu_constraint_sdp}
\end{equation}
where $n_{\tilde x} = \Sigma_{j=0}^K n_j$. $l^{[k]}$ and $u^{[k]}$ are the lower and upper bounds on $x^{[k]}$, respectively. Given $\mathcal{X}$, only $l^{[0]}$ and $u^{[0]}$ are known. The indexing notation in this paper is inherited from \cite{raghunathan2018semidefinite} to promote consistency; namely, $a[x^{[k]}] = a[\Sigma_{j=0}^{k-1} n_j+1:\Sigma_{j=0}^{k} n_j]$ for any vector $a \in \mathbb{R}^{n_{\tilde x}}$ and $A[x^{[k]}(x^{[l]})^T] = A[\Sigma_{j=0}^{k-1} n_j+1:\Sigma_{j=0}^{k} n_j, \Sigma_{j=0}^{l-1} n_j+1:\Sigma_{j=0}^{l} n_j]$ for any matrix $A \in  \mathbb{S}^{n_{\tilde x}}$.

For the sake of brevity, henceforth we use the shorthand notations $\tilde x[k] \triangleq \tilde x[x^{[k]}]$, $\tilde X[A_k] \triangleq \tilde X[x^{[k-1]}(x^{[k-1]})^T]$, $\tilde X[B_k] \triangleq \tilde X[x^{[k-1]}(x^{[k]})^T]$, $\tilde X[C_k] \triangleq \tilde X[x^{[k]}(x^{[k]})^T]$, and:
\begin{equation}
	\tilde X[k] \triangleq \begin{bmatrix}
		\tilde X[A_k] && \tilde X[B_k] \\
		\tilde X[B^\top_k] && \tilde X[C_k]
	\end{bmatrix}
\end{equation}

Furthermore, define:
\begin{equation}
	\Xi \triangleq \begin{bmatrix}
		1 && \tilde x^\top \\
		\tilde x && \tilde X
	\end{bmatrix}
\end{equation}
Note that $\tilde X \succeq \tilde x \tilde x^\top$ if and only if $\Xi \succeq 0$.

The above relaxation implies that if $\hat f_i^*(\mathcal{X}) \leq 0$, then it is guaranteed  that $f_i^*(\mathcal{X}) \leq 0$. However, if $\hat f_i^*(\mathcal{X}) \geq 0$, it is impossible to conclude whether $f_i^*(\mathcal{X}) \geq 0$ or the relaxation is loose. 

\subsection{Tightness of SDP Relaxation}
\label{sec:tightness}
Compared to previous convex relaxation schemes (such as LP), SDP indeed yields a tighter lower bound \cite{raghunathan2018semidefinite}. However, according to the above paper and the recent results in \cite{zhang2020tightness}, SDP relaxations of Multi-Layer Perceptron (MLP) ReLU networks are not tight even for single-layer instances. This issue will be elaborated below. 

Since $\Xi$ is positive-semidefinite, it can be decomposed as: 
\begin{equation}
	\Xi = V V^\top, \quad \text{where} \ V = \begin{bmatrix}
		\vec e \\ \vec x_1^\top \\ \vec x_2^\top \\ \vdots  \\ \vec x_{n_{\tilde x}}^\top 
	\end{bmatrix} \in \mathbb{R}^{(n_{\tilde x}+1) \times r}
	\label{eqn:gram_factorization}
\end{equation}
Each vector $\vec x_i$ has dimension r, which is the rank of $\Xi$. Since $\vec e \cdot \vec e = 1$, $\vec e$ is a unit vector in $\mathbb{R}^r$. Furthermore, we have $\tilde x_i = \vec e \cdot \vec x_i$ for $i \in \{1,\dots,\tilde n_x\}$. The constraints in $\mathcal{N}^{[k]}_{
\text{SDP}}$ can be broken down into 2 parts: Input constraints, and ReLU constraints. 

Input constraints can be regarded as restricting each vector in the set $\{\vec x_i | i \in \idx(\tilde x, \tilde x[k-1])\}$ to lie in a circle centered at $\frac{1}{2}(l_i+u_i)\vec e$ with radius $\frac{1}{2}(u_i-l_i)$. ReLU constraints, as a generalization to the analysis performed in the aforementioned papers, can be interpreted as $\vec \chi_j$ lying \emph{on} the circle with $\vec x_j$ as its diameter for all $\{j | j \in \idx(\tilde x, \tilde x[k])\}$. Here, $\vec \chi_j = \sum_{i=1}^{|x^{[k-1]}|} w_{ij} \vec x_i$, where $\{w_{ij}\}_{i=1}^{|x^{[k-1]}|}$ is the $j^{\text{th}}$ row of $W^{[k]}$. ReLU constraints also constrain $\vec x_j$ to have a nonnegative dot product with $\vec e$, and a longer projection on $\vec e$ than $\vec \chi_j$ does.

As pointed out in Lemma 6.1 of \cite{zhang2020tightness}, the SDP relaxation is tight if and only if $\vec e$ and all the vectors $\vec x_i$'s are collinear. Since there is no constraint on the angle between $\vec x_i$ and $\vec e$, the resulting spherical cap (as termed in Section 4 of the paper) always exists, and the height of this cap will be an upper bound on the relaxation gap of the corresponding entry in $\tilde x$.

\section{Convex Cuts}
As a well-known technique in nonlinear and mixed-integer optimization, adding valid convex constraints (convex cuts) could potentially reduce the relaxation gap of the problem. The most effective cut for SDP relaxation is based on the reformulation-linearization technique (RLT), which is obtained by multiplying linear constraints and relaxing the product into linear matrix constraints \cite{sherali2013reformulation}.

In the formulation of $\mathcal{N}^{[k]}_{\text{SDP}}$, there are only 2 linear constraints: $\tilde x[k] \geq 0 \ \text{and} \ \tilde x[k] \geq W^{[k-1]}\tilde x[k-1]$. The RLT cuts associated with these linear constraints exist in the original constraint set $\mathcal{N}^{[k]}_{\text{SDP}}$, except for the one obtained by the following multiplication:
\begin{equation}
\begin{aligned}
	& (\tilde x[k] - W^{[k-1]}\tilde x[k-1])(\tilde x[k] - W^{[k-1]}\tilde x[k])^\top \geq 0
\end{aligned}
\end{equation}

which leads to the relaxed cut:
\begin{equation}
	\begin{aligned}
		&\text{diag}(\tilde X[C_k] -  W^{[k-1]}X[B_k] - X[B_k^\top]W^{[k-1]^\top} \\
	& + W^{[k-1]}  X[A_k]  W^{[k-1]^\top}) \geq 0
	\end{aligned}
	\label{eqn:RLT}
\end{equation}

Note that only the diagonal entries are of interest because the other terms do not appear in the original QCQP formulation of the problem.

\subsection{Deficiency of RLT}
Although the above RLT reformulation seems to add new information to the SDP relaxation, a closer examination reveals otherwise. 
\begin{proposition}
	Constraint set (\ref{eq: one_layer_relu_constraint_sdp}) implies the RLT inequality (\ref{eqn:RLT}) for all $k \in \{1,\dots,K\}$.
\end{proposition}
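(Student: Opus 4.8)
The plan is to recognize that the matrix whose diagonal is taken in (\ref{eqn:RLT}) is nothing but a congruence transform of the layerwise block $\tilde X[k]$, so that the positive semidefiniteness already present in the relaxation forces the whole matrix to be PSD and hence makes each diagonal entry nonnegative. Concretely, I would introduce the linear map $M \triangleq \begin{bmatrix} -W^{[k-1]} & I \end{bmatrix}$ (of size $n_k\times(n_{k-1}+n_k)$) and expand $M\,\tilde X[k]\,M^\top$ block by block.

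A direct multiplication yields
\begin{equation*}
M\,\tilde X[k]\,M^\top = \tilde X[C_k] - W^{[k-1]}\tilde X[B_k] - \tilde X[B_k^\top]W^{[k-1]^\top} + W^{[k-1]}\tilde X[A_k]W^{[k-1]^\top},
\end{equation*}
which is exactly the argument of $\diag(\cdot)$ in (\ref{eqn:RLT}). This is just the relaxation of the rank-one product $(\tilde x[k]-W^{[k-1]}\tilde x[k-1])(\tilde x[k]-W^{[k-1]}\tilde x[k-1])^\top$, each quadratic monomial replaced by the corresponding block of $\tilde X[k]$; the only care needed here is bookkeeping with the indexing convention, together with the elementary fact that a matrix and its transpose share a diagonal.

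Next I would invoke the semidefinite constraint. Since the relaxation enforces $\Xi \succeq 0$, equivalently $\tilde X \succeq \tilde x\tilde x^\top \succeq 0$, the principal submatrix $\tilde X[k]$ inherits $\tilde X[k]\succeq 0$; congruence preserves positive semidefiniteness, so $M\,\tilde X[k]\,M^\top \succeq 0$, and in particular each of its diagonal entries is nonnegative, which is precisely (\ref{eqn:RLT}). Equivalently, and more transparently entrywise, the $j$-th diagonal entry equals $u_j^\top \tilde X[k]\,u_j$ with the test vector $u_j \triangleq \begin{bmatrix} w_j^\top & -e_j^\top\end{bmatrix}^\top$, where $w_j$ is the $j$-th row of $W^{[k-1]}$ and $e_j$ the $j$-th standard basis vector, so nonnegativity is immediate. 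If one wishes to expose the ReLU structure, the equality $\diag(\tilde X[C_k]) = \diag(W^{[k-1]}\tilde X[B_k])$ of (\ref{eq: one_layer_relu_constraint_sdp}) collapses the $j$-th diagonal to $w_j^\top \tilde X[A_k]w_j - \tilde X[C_k]_{jj}$, which is the Cauchy--Schwarz / Schur-complement gap associated with $\tilde X[k]\succeq 0$ and is therefore nonnegative.

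The one genuine subtlety---and the main point to get right---is that the claim leans on the ambient constraint $\tilde X\succeq 0$ accompanying $\mathcal{N}^{[k]}_{\text{SDP}}$ inside the full relaxation (\ref{eqn:sdp-relaxation}): the linear and diagonal (in)equalities of (\ref{eq: one_layer_relu_constraint_sdp}) by themselves do not carry the second-order information needed to lower-bound $w_j^\top \tilde X[A_k]w_j$ by $\tilde X[C_k]_{jj}$, so the dependence on the semidefinite constraint should be stated up front. Once that is acknowledged, the argument reduces to the congruence identity above plus the standard facts that principal submatrices and congruences of PSD matrices remain PSD, so I do not expect any technical obstacle beyond careful matching of the index notation.
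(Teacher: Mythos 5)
Your proof is correct, and it takes a genuinely different --- and more direct --- route than the paper's. The paper also begins from the fact that $\tilde X[k]$ is a positive semidefinite principal submatrix of $\tilde X$, but then routes through the generalized Schur complement: it uses $\tilde X[A_k]-\tilde X[B_k](\tilde X[C_k])^{\dagger}\tilde X[B_k^\top]\succeq 0$ together with the range condition $(I-\tilde X[C_k](\tilde X[C_k])^{\dagger})\tilde X[B_k^\top]=0$, conjugates by $W^{[k-1]}$, and finally invokes the ReLU equality $\diag(\tilde X[C_k])=\diag(W^{[k-1]}\tilde X[B_k])$ to rearrange the result into (\ref{eqn:RLT}). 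Your single congruence, exhibiting the RLT matrix as $M\tilde X[k]M^\top$ with $M=\begin{bmatrix}-W^{[k-1]} & I\end{bmatrix}$, reaches the same conclusion with no pseudo-inverses and, notably, with no use of the ReLU constraints at all. That buys two things. First, it establishes something slightly stronger than the proposition asserts: the RLT cut is redundant given the semidefinite constraint alone, irrespective of the equalities and inequalities in (\ref{eq: one_layer_relu_constraint_sdp}). Second, it sidesteps the one step the paper glosses over: passing from $\diag\bigl(W^{[k-1]}(\tilde X[A_k]-\tilde X[B_k](\tilde X[C_k])^{\dagger}\tilde X[B_k^\top])W^{[k-1]^\top}\bigr)\geq 0$ to the paper's next display actually requires a Cauchy--Schwarz-type argument showing $\diag(W^{[k-1]}\tilde X[B_k](\tilde X[C_k])^{\dagger}\tilde X[B_k^\top]W^{[k-1]^\top})\geq\diag(\tilde X[B_k^\top]W^{[k-1]^\top})$, which the paper never supplies but your argument never needs. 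Your closing caveat is also well taken, and it applies equally to the paper's own proof: the semidefinite constraint being used lives in (\ref{eqn:sdp-relaxation}), not in (\ref{eq: one_layer_relu_constraint_sdp}) itself, so both proofs implicitly read the constraint set as sitting inside the full relaxation.
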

\begin{proof}
	Since $\tilde X \succeq 0$, all principle submatrices $\tilde X[k]$ are positive semidefinite for $k \in \{1,\dots,K\}$. Then, $\tilde X[k] \succeq 0$ can be restated in terms of the general Schur's complement:
\begin{equation}
\begin{aligned}
	&\tilde X[k] \succeq 0 \Longleftrightarrow \{\tilde X[C_k] \succeq 0, \\
	&\tilde X[A_k] - \tilde X[B_k](\tilde X[C_k])^{\dagger}\tilde X[B^\top_k] \succeq 0 , \\
	&(I-\tilde X[C_k](\tilde X[C_k])^{\dagger})\tilde X[B^\top_k] = 0 \}
\end{aligned}
\end{equation}
Now, one can write:
\begin{equation}
\begin{aligned}
	&\text{diag}(W^{[k-1]}\tilde X[A_k]W^{[k-1]^\top} - \\
	&W^{[k-1]}\tilde X[B_k](\tilde X[C_k])^{\dagger}\tilde X[B^\top_k]W^{[k-1]^\top}) \geq 0
\end{aligned}
\end{equation}
After noticing that $\diag(\tilde X[C_k]) = \diag(W^{[k-1]}\tilde X[B_k])$ and $(I-\tilde X[C_k](\tilde X[C_k])^{\dagger})\tilde X[B^\top_k] = 0$, we obtain:
\begin{equation}
\begin{aligned}
	\text{diag}(W^{[k-1]}\tilde X[A_k]W^{[k-1]^\top} - \tilde X[B^\top_k]W^{[k-1]^\top}) \geq 0
\end{aligned}
\end{equation}
By adding $\diag(\tilde X[C_k]-W^{[k-1]}\tilde X[B_k]) = 0$ to both sides of equation, the above inequality yields (\ref{eqn:RLT}).
\end{proof}

Therefore, adding convex cuts to (\ref{eq: sdp_relaxed_certification_problem}) using RLT will only increase computation time without reducing the relaxation gap.

\section{Non-Convex Cuts}

\subsection{Source of Relaxation Gap}
The SDP relaxation is obtained by replacing the equality constraint $\tilde X-\tilde x \tilde x^\top=0$ with the convex inequality $\tilde X-\tilde x \tilde x^\top \succeq 0$. Hence, the non-convex constraint $\tilde X-\tilde x \tilde x^\top \preceq 0 $ excluded in this formulation is the source of the relaxation gap. 

One necessary condition for $\tilde X-\tilde x \tilde x^\top \preceq 0$ is:
\begin{equation}
	\begin{aligned}
		& \phi_i^\top (\tilde X - \tilde x \tilde x^\top) \phi_i \leq 0 ~ \forall i ~ \text{such that} \\
		& \{\phi_1, \dots, \phi_{n_{\tilde x}}\} \text{forms a basis in} ~ \mathbb{R}^{n_{\tilde x}}
	\end{aligned} \label{eqn:nsd_constraints}
\end{equation}
However, since $-\|\phi_i^\top \tilde x\|^2$ is concave in $\tilde x$, it is impossible to add this constraint under a convex optimization framework.

A non-convex cut is defined to be a valid constraint that is non-convex. In this case, any constraint in the form of (\ref{eqn:nsd_constraints}) is a non-convex cut. 

\subsection{A Penalization Approach}
\label{sec:penalization}
Since directly incorporating any non-convex cut makes the resulting problem non-convex, one may resort to penalization techniques. Explicitly, the objective of (\ref{eq: sdp_relaxed_certification_problem}) can be modified as:
\begin{equation*}
	c_i^\top z + \sum_{i=1}^{n_{\tilde x}} \phi_i^\top (\tilde x \tilde x^\top - \tilde X) \phi_i
\end{equation*}
Nevertheless, the constraint $\tilde X - \tilde x \tilde x^\top \succeq 0$ implies that $c_i^\top z + \sum_{i=1}^{n_{\tilde x}} \phi_i^\top (\tilde x \tilde x^\top - \tilde X) \phi_i \leq c_i^\top z$, therefore making the new problem not necessarily a relaxation of the original problem, i.e. the case $\hat f_i^*(\mathcal{X}) \leq f_i^*(\mathcal{X})$ is possible. Moreover, this penalization approach adds a convex term ($\|\phi_i^\top \tilde x\|^2$) to the maximizing objective, which destroys the convexity of the problem.

To avoid this issue, one may use the technique proposed in \cite{luo2019enhancing}. That work penalizes a given QCQP with a linear objective such that the problem remains a relaxation of the original problem after penalization. The authors proposed a sequential SDP procedure to approximate $ f_i^*(\mathcal{X})$ for any $i$ in (\ref{eq: robustness_certification_problem}) using the modified objective $p^*_i(\mathcal{X})$, defined as:
\begin{align}
	 p^*_i = \sup_{(x,z) \in\mathcal{N}_\text{SDP},\ x \in \mathcal{X}}c_i^\top z + \sqrt{c_i^\top (\tilde x \tilde x^\top - \tilde X)c_i}
	\label{eqn:penalized_objective}
\end{align}
it can be shown that $p^*_i(\mathcal{X})$ remains a relaxation of the original problem (\ref{eq: robustness_certification_problem}), i.e. $p^*_i(\mathcal{X}) \geq f_i^*(\mathcal{X})$.

\subsubsection{Deficiency of Penalty Methods}

The problem with the approach proposed in (\ref{eqn:penalized_objective}) is that the vector $c_i$ inside the square root must match that of the original linear objective, making it practically limited. Arguing under the framework of adding constraints in the form of (\ref{eqn:nsd_constraints}), the method in \cite{luo2019enhancing} only enforces one of them, namely $c_i^\top (\tilde X - \tilde x \tilde x^\top) c_i \leq 0$. Objective (\ref{eqn:penalized_objective}) ensures that $c_i^\top (\tilde X - \tilde x \tilde x^\top) c_i$ is as small as possible.

Although interior point methods are known to converge to maximum rank solutions, the relaxed matrix is low rank in general. Therefore, it is likely that the constraint $c_i^\top (\tilde X - \tilde x \tilde x^\top) c_i \leq 0$ is already satisfied for the unmodified solution. Simulations found in Section 6 corroborate this fact.

\subsection{Secant Approximation}
To address the above-mentioned issues, we leverage the method of secant approximation, inspired by \cite{saxena2010convex}. Note that the constraint (\ref{eqn:nsd_constraints}) can be written as:
\begin{equation*}
	 -(\phi_i^\top \tilde x)^2 \leq -\langle \tilde X, \phi_i \phi_i^\top \rangle
\end{equation*}

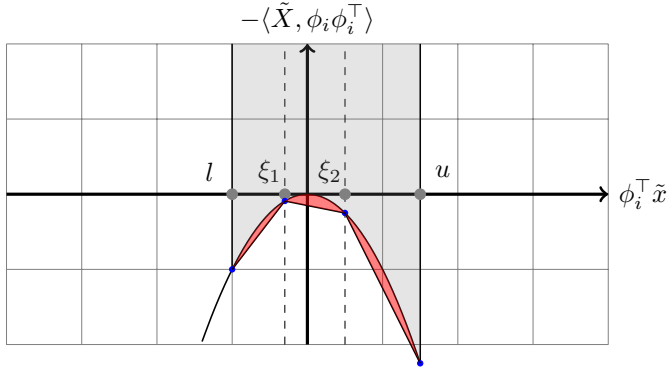
\begin{figure}[htp]
    \centering
    \begin{tikzpicture}
	\draw[step=1,gray,very thin] (-4,-2) grid (4,2);
  	\draw[->,very thick] (-4, 0) -- (4, 0) node[right] {$\phi_i^\top \tilde x$};
  	\draw[->,very thick] (0, -2) -- (0, 2) node[above] {$-\langle \tilde X, \phi_i \phi_i^\top \rangle$};
  \draw[domain=-1.4:1.5, smooth, variable=\x, black, semithick] plot ({\x}, {-\x*\x});
  \draw[semithick] (-1,-1)--(-1,2);
  \draw[semithick] (1.5,-2.25)--(1.5,2);
  \filldraw [gray] (-1,0) circle (2pt);
  \node[xshift=-3mm,yshift=3mm] at (-1,0)  {$l$};
  \filldraw [gray] (1.5,0) circle (2pt);
  \node[xshift=3mm,yshift=3mm] at (1.5,0)  {$u$};
  
  \draw[dashed] (0.5,-2) -- (0.5,2);
  \filldraw [gray] (0.5,0) circle (2pt);
  \node[xshift=-2mm,yshift=3mm] at (0.5,0)  {$\xi_2$};
  \draw[dashed] (-0.3,-2) -- (-0.3,2);
  \filldraw [gray] (-0.3,0) circle (2pt);
  \node[xshift=-2mm,yshift=3mm] at (-0.3,0)  {$\xi_1$};
  
  \fill [gray, opacity=0.2,domain=-1:1.5, variable=\x]
      (-1, 2)
      -- plot ({\x}, {-\x*\x})
      -- (1.5, 2)
      -- cycle;
      
  \filldraw [blue] (-1,-1) circle (1pt);
  \filldraw [blue] (-0.3,-0.09) circle (1pt);
  \filldraw [blue] (0.5,-0.25) circle (1pt);
  \filldraw [blue] (1.5, -2.25) circle (1pt);

  \draw[semithick] (-1, -1) -- (-0.3, -0.09);
  \draw[semithick] (-0.3, -0.09) -- (0.5, -0.25);
  \draw[semithick] (0.5, -0.25) -- (1.5, -2.25);
  
  \fill [red, opacity=0.5,domain=-1:-0.3, variable=\x]
      (-1, -1)
      -- plot ({\x}, {-\x*\x})
      -- (-0.3, -0.09)
      -- cycle;
      
  \fill [red, opacity=0.5,domain=-0.3:0.5, variable=\x]
      (-0.3, -0.09)
      -- plot ({\x}, {-\x*\x})
      -- (0.5, -0.25)
      -- cycle;
      
   \fill [red, opacity=0.5,domain=0.5:1.5, variable=\x]
      (0.5, -0.25)
      -- plot ({\x}, {-\x*\x})
      -- (1.5, -2.25)
      -- cycle;
  
	\end{tikzpicture}
    \caption{ $l$ and $u$ are lower and upper bounds of $\{\phi_i^\top \tilde x | ~ (\tilde x, \tilde X) \in \mathcal{N}_{\text{SDP}}\}$, respectively. Red areas indicate false feasible space induced by the secant approximation.}
    \label{fig: secant_approx}
\end{figure}
Graphically, the grey area in Figure \ref{fig: secant_approx} indicates the original feasible space of $(\tilde x, \tilde X)$ constrained by $-(\phi_i^\top \tilde x)^2 \leq -\langle \tilde X, \phi_i \phi_i^\top \rangle$. Since the set is non-convex, we use secant lines to approximate it. The approximated feasible space contains every point above the 3 secant lines, namely the grey area plus the red area. $\xi_1,\xi_2$ are the points by which the feasible space is divided. For each part of the space, we use a separate line to lower-bound it. In other words, for each divided space, a necessary condition for $(\tilde x, \tilde X)$ to lie in the original feasible space is given. As observed in Figure \ref{fig: secant_approx}, since secants are used for approximation, it is important that $l$ and $u$ be known in advance. For this certification problem, those bounds can be calculated with auxiliary SDPs. More importantly, for any $\{ (\tilde x, \tilde X) \in \mathcal{N}_{\text{SDP}} | \tilde X = \tilde x \tilde x^\top \}$, $(\tilde x, \tilde X)$ lies above exactly one secant line. 

In mathematical language, this means that
\begin{equation}
	\begin{aligned}
			 \bigvee_{q \in \{0,\dots ,Q+1\}} (\tilde x, \tilde X) \in \Gamma_q(\phi_i)
	\end{aligned} \label{eqn:disjuntive_secant}
\end{equation}
where $ \Gamma_q(\phi_i)$ is defined as
\begin{equation}
	\begin{aligned}
			\bigvee_{q \in \{0,\dots ,Q+1\}} \begin{bmatrix}
			 	\xi_q \leq \phi_i^\top \tilde x \leq \xi_{q+1}, ~ \text{and} \\
			 	\xi_q \xi_{q+1} - (\phi_i^\top \tilde x) (\xi_q + \xi_{q+1}) \leq -\langle \tilde X, \phi_i \phi_i^\top \rangle
			 \end{bmatrix}
	\end{aligned}
\end{equation}
and $\bigvee$ is a logical OR symbol commonly used in boolean algebra and $\xi_0 \triangleq l, \xi_{q+1} \triangleq u$, with $Q$ being the number of dividing points. This means that every pair $(\tilde x, \tilde X)$ in $\{ (\tilde x, \tilde X) \in \mathcal{N}_{\text{SDP}} | \tilde X = \tilde x \tilde x^\top \}$ belongs to only \emph{one} $\Gamma_q(\cdot)$ for some $q$, except when $\phi_i^\top \tilde x = \xi_q$ for $q= \{0,\dots, Q+1\}$.

\begin{lemma}
	The secant approximations (\ref{eqn:disjuntive_secant}) exactly recovers the constraint $-(\phi_i^\top \tilde x)^2 \leq -\langle \tilde X, \phi_i \phi_i^\top \rangle$ when $Q \rightarrow \infty$.
		\label{lemma:infinite_division}
\end{lemma}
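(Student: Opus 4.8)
The plan is to collapse everything to the one-dimensional picture of Figure~\ref{fig: secant_approx}. Writing $t \triangleq \phi_i^\top \tilde x$ and $s \triangleq -\langle \tilde X, \phi_i\phi_i^\top\rangle$, the target constraint reads $s \ge -t^2$, i.e.\ the point $(t,s)$ must lie on or above the concave parabola $s=-t^2$ over $t\in[l,u]$. Each disjunct $\Gamma_q(\phi_i)$ is precisely the set of $(t,s)$ with $\xi_q\le t\le\xi_{q+1}$ lying on or above the secant through the parabola's endpoints $(\xi_q,-\xi_q^2)$ and $(\xi_{q+1},-\xi_{q+1}^2)$, whose equation simplifies to $s \ge \xi_q\xi_{q+1}-(\xi_q+\xi_{q+1})t$, matching the second line of $\Gamma_q(\phi_i)$. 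Letting $F$ denote the true feasible region and $A_Q \triangleq \bigcup_{q} \Gamma_q(\phi_i)$ the disjunctive relaxation (with $\xi_0=l$, $\xi_{Q+1}=u$), I would prove that as $Q\to\infty$ the set $A_Q$ collapses onto $F$.

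First I would establish \emph{validity} for every finite $Q$, i.e.\ $F\subseteq A_Q$, so the cut never removes a genuinely feasible pair. The key identity is that the signed vertical gap between parabola and $q$-th secant factors as $g_q(t) = -t^2 - \big[\xi_q\xi_{q+1}-(\xi_q+\xi_{q+1})t\big] = -(t-\xi_q)(t-\xi_{q+1})$. Since $(t-\xi_q)\ge 0$ and $(t-\xi_{q+1})\le 0$ on $[\xi_q,\xi_{q+1}]$, we get $g_q(t)\ge 0$ there, so the secant lies weakly below the parabola (pure concavity). Hence any $(t,s)$ with $s\ge -t^2$ automatically obeys $s\ge -t^2 - g_q(t) = \xi_q\xi_{q+1}-(\xi_q+\xi_{q+1})t$ on the interval containing $t$, giving $(t,s)\in\Gamma_q(\phi_i)\subseteq A_Q$.

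Next I would bound the excess region $A_Q\setminus F$ and drive it to zero. Maximizing $g_q$ over $[\xi_q,\xi_{q+1}]$ gives the midpoint value $\max_t g_q(t) = (\xi_{q+1}-\xi_q)^2/4$, so the vertical thickness of each red sliver is at most $\Delta_Q^2/4$, where $\Delta_Q \triangleq \max_q(\xi_{q+1}-\xi_q)$ is the mesh. For a strictly infeasible point $(t_0,s_0)$ with $s_0 < -t_0^2$, set $\delta \triangleq -t_0^2-s_0>0$; once $\Delta_Q^2/4 < \delta$, the gap at $t_0$ satisfies $g_q(t_0)\le \Delta_Q^2/4 < \delta$, whence $s_0 = -t_0^2-\delta < -t_0^2 - g_q(t_0)$, so $(t_0,s_0)\notin\Gamma_q$ and thus $(t_0,s_0)\notin A_Q$. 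Choosing partitions with $\Delta_Q\to 0$ as $Q\to\infty$ (e.g.\ the uniform split of $[l,u]$, for which $\Delta_Q=(u-l)/(Q+1)$) excludes every violating pair for all large $Q$; combined with $F\subseteq A_Q$ this yields $\bigcap_Q A_Q = F$, the claimed exact recovery. The disjunctive ``exactly one $\Gamma_q$'' bookkeeping noted after (\ref{eqn:disjuntive_secant}) makes the union well behaved on the interval endpoints.

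I expect the only genuine subtlety, rather than a hard obstacle, to be pinning down the mode of convergence attached to ``$Q\to\infty$''. The statement fails if $Q$ grows while $\Delta_Q$ stays bounded away from zero, so the argument must assume the dividing points refine the partition (mesh $\to 0$), or fold this into the definition of the scheme; this is automatic for the uniform partition. Everything else is the elementary factorization $g_q(t)=-(t-\xi_q)(t-\xi_{q+1})$ together with its midpoint maximum, so no further machinery is needed.
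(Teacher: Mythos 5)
Your proof is correct, and its core is the same limiting idea as the paper's: as the partition refines, the secant on the interval containing $\phi_i^\top \tilde x$ collapses onto the parabola. The difference is one of rigor and completeness. The paper's proof is two informal sentences: it sets $\xi_q = \phi_i^\top \tilde x = \xi_{q+1}$ ``in the limit'' and observes that the secant inequality then reads $-(\phi_i^\top \tilde x)^2 \leq -\langle \tilde X, \phi_i \phi_i^\top \rangle$; it addresses only the exactness direction, never separately verifies that the disjunction is a valid relaxation for finite $Q$ (your containment $F \subseteq A_Q$), and does not quantify how the excess region vanishes. You prove both containments: validity via the factorization $g_q(t) = -(t-\xi_q)(t-\xi_{q+1}) \geq 0$ on $[\xi_q,\xi_{q+1}]$ (pure concavity), and exactness via the mesh bound $\max_t g_q(t) = (\xi_{q+1}-\xi_q)^2/4$, which expels every strictly infeasible point once the mesh $\Delta_Q$ is small enough. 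You also make explicit the hypothesis hidden in ``$Q \to \infty$'': the conclusion requires $\Delta_Q \to 0$, which is automatic for the equal partition of Proposition \ref{prop:equal_division} (the scheme the paper actually uses) but not for arbitrary sequences of dividing points. This caveat is worth having on record, since Lemma \ref{lemma:infinite_division} is later invoked in Theorem \ref{thm:main} and Corollary \ref{corollary:1}; your quantitative version makes those invocations airtight where the paper's one-line limit leaves them informal.
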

\begin{proof}
	As $Q \rightarrow \infty$, $\xi_q = \phi_i^\top \tilde x = \xi_{q+1}$ for $q = \{0,\dots,Q+1\}$. Therefore, $\xi_q \xi_{q+1} - (\phi_i^\top \tilde x) (\xi_q + \xi_{q+1}) = (\phi_i^\top \tilde x)^2 - 2(\phi_i^\top \tilde x)^2 = -(\phi_i^\top \tilde x)^2$, resulting in $-(\phi_i^\top \tilde x)^2 \leq -\langle \tilde X, \phi_i \phi_i^\top \rangle$
\end{proof}
Of course, an infinite number of divisions is impractical from both a complexity standpoint and a numerical standpoint. Thus, selecting $\xi_q$'s intelligently to minimize the red area in Figure \ref{fig: secant_approx} is critical.
\begin{proposition}
	Given $l$ and $u$ for $\{\phi_i^\top \tilde x | ~ (\tilde x, \tilde X) \in \mathcal{N}_{\text{SDP}}\}$, a secant approximation with $Q$ division points $\{\xi_q\}_{q=1}^Q$ achieves the best approximation when $[l,u]$ is equally partitioned.
	\label{prop:equal_division}
\end{proposition}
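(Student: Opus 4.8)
The plan is to interpret ``best approximation'' as the one that minimizes the total false feasible space, i.e.\ the combined red area in Figure~\ref{fig: secant_approx}, and then to show that this area is minimized precisely by an equal partition. First I would set up coordinates $t = \phi_i^\top \tilde x$ and $y = -\langle \tilde X, \phi_i \phi_i^\top\rangle$, so that the true constraint $-(\phi_i^\top\tilde x)^2 \le -\langle\tilde X,\phi_i\phi_i^\top\rangle$ is exactly the region on or above the parabola $y = -t^2$. On each subinterval $[\xi_q,\xi_{q+1}]$ the secant approximation replaces the parabola by the chord through $(\xi_q,-\xi_q^2)$ and $(\xi_{q+1},-\xi_{q+1}^2)$, and the red region for that subinterval is exactly the parabolic segment trapped between that chord and the parabola. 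A key structural observation is that these segments are disjoint across subintervals, so the total false area decouples additively.

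Second, I would compute the area of a single segment. Subtracting the chord from the parabola yields, after simplification, the factored form $-(t-\xi_q)(t-\xi_{q+1})$, which is nonnegative on $[\xi_q,\xi_{q+1}]$ (confirming the parabola lies above the chord there). Integrating gives the segment area $\int_{\xi_q}^{\xi_{q+1}} -(t-\xi_q)(t-\xi_{q+1})\,dt = \tfrac{1}{6}(\xi_{q+1}-\xi_q)^3$. Summing over the $Q+1$ subintervals, the total false feasible area equals $\tfrac{1}{6}\sum_{q=0}^{Q}(\xi_{q+1}-\xi_q)^3$, a clean cubic in the gap lengths $h_q \triangleq \xi_{q+1}-\xi_q$.

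Third, I would phrase the minimization. With $\xi_0 = l$ and $\xi_{Q+1} = u$ fixed, the gap lengths obey $\sum_{q=0}^{Q} h_q = u - l$ and $h_q \ge 0$, so we minimize $\sum_q h_q^3$ over a simplex of fixed total. Since $t\mapsto t^3$ is convex on $[0,\infty)$, this is a convex program, and Jensen's inequality (equivalently, the Lagrange stationarity condition $3h_q^2 = \lambda$ forcing all $h_q$ equal) shows the minimum is attained at $h_q = \tfrac{u-l}{Q+1}$ for every $q$. This is exactly the equal partition $\xi_q = l + q\,\tfrac{u-l}{Q+1}$, which proves the claim.

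The integral and the final convexity argument are routine; the step I would flag as the real content is the first one, namely pinning down that ``best approximation'' is to be measured by total red (false feasible) area and verifying that these areas decouple additively across the subintervals. Once that modeling choice is justified, the remainder reduces to minimizing a sum of cubes under a fixed linear constraint, which is an elementary consequence of the strict convexity of the cube.
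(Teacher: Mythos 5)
Your proposal is correct and follows essentially the same route as the paper: both interpret ``best approximation'' as minimizing the total red (false feasible) area, decompose it additively over the subintervals, and conclude equal spacing via convexity of the resulting objective. The only difference is cosmetic --- you evaluate each segment integral in closed form as $\tfrac{1}{6}(\xi_{q+1}-\xi_q)^3$ and then minimize the sum of cubes over a simplex, whereas the paper leaves the integral unevaluated and sets the gradient to zero to obtain the midpoint condition $\xi_q = \tfrac{1}{2}(\xi_{q-1}+\xi_{q+1})$; your version is, if anything, slightly more explicit.
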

\begin{proof}
	We optimize for the red area in Figure \ref{fig: secant_approx}:
	\begin{equation*}
		\min_{\{\xi_q\}_{q=1}^Q} \sum_{q=1}^{Q+1} \int_{\xi_{q-1}}^{\xi_q} -x^2 - (\xi_{q-1} \xi_q - x(\xi_{q-1}+\xi_q)) d x 
	\end{equation*}
	Since the objective $f(\cdot)$ is convex in $\{\xi_q\}_{q=1}^Q$, $\nabla f(\{\xi_q\}_{q=1}^Q) = 0$ implies that $\xi_q = \frac{1}{2}(\xi_{q-1}+\xi_{q+1})$ for $q= \{0,\dots, Q+1\}$. This further implies an equal partitioning of $[l,u]$.
\end{proof}

\section{Sequential Construction of Valid Cuts}

Based on the discussion of the pros and cons of various strengthening techniques in the previous section, the method of secant approximation offers a significant benefit over the existing techniques. However, we have only discussed how to approximate a single constraint $\phi_i^\top (\tilde X - \tilde x \tilde x^\top) \phi_i > 0$ for a particular $\phi_i$ vector, but the deployment of secant approximation requires an in-depth analysis and careful design.

There are 2 main questions arising from this approach:
\begin{enumerate}
	\item Which $\phi_i$ vector should be chosen? How do we know if the constraint of $\phi_i^\top (\tilde X - \tilde x \tilde x^\top) \phi_i > 0$ is already satisfied?
	\item How many such constraints shall be added?
\end{enumerate}
The first question can be addressed via a technique called Cut Generating Linear Programming (CGLP) and the second one can be solved via a sequential procedure. The two above-mentioned solutions will be elaborated in the following two subsections.

\subsection{Constructing Valid Cuts}
To study whether $\phi_i^\top (\tilde X - \tilde x \tilde x^\top) \phi_i > 0$ is already satisfied, a candidate solution $(\tilde x^*, \tilde X^*)$ is needed, which can be easily obtained by running the original SDP problem. By direct substitution, one can verify if the given constraint $\phi_i^\top (\tilde X^* - \tilde x^* \tilde (x^*)^\top) \phi_i > 0$ is redundant. More importantly, with the candidate solution, one can find the eigenvectors corresponding to the largest positive eigenvalues of $\tilde x^*(\tilde x^*)^\top- \tilde X^*$ in order to find $\phi_i$s that violate the negative semidefinite constraint the most.

However, this is not enough, because secant approximation induces a false feasible space (colored in red in Figure \ref{fig: secant_approx}). As a result, it could happen that while a given constraint is not redundant, its secant approximation is. This phenomenon is well illustrated by Example 1 in \cite{saxena2010convex}.

Therefore, it is necessary to actively search for a non-redundant secant approximation, termed a \emph{valid cut}, given a candidate $\phi_i$ vector, which is usually an eigenvector of $\tilde x^*(\tilde x^*)^\top- \tilde X^*$ corresponding to a strictly positive eigenvalue. Mathematically, this can be accomplished via the Cut Generating Linear Programming (CGLP) problem \cite{saxena2010convex}:

\begin{equation}
	\tag{CGLP}
	\begin{aligned}
		\min_{\alpha,\beta, \{\mu^q\}, \{\nu^q\}} ~ &\alpha^\top \chi^* -\beta \\
		\text{s.t.} ~ &A^\top \mu^q + D_q^\top \nu^q \leq \alpha \quad q \in \{1,\dots, Q\} \\
		&b^\top \mu^q + d_q^\top \nu^q \geq \beta \quad q \in \{1,\dots, Q\} \\
		&\mu^q, \nu^q \geq 0 \quad q \in \{1,\dots, Q\} \\
		& \sum_{q=1}^Q (\kappa^\top \mu^q + \eta_q^\top \nu^q) = 1
	\end{aligned}
	\label{eqn:cglp}
\end{equation}
where $\chi$ is a vectorized version of the variables $(\tilde x, \tilde X)$ with $\chi^* = [\tilde x^{*^\top}, \vecc(\tilde X^*)^\top]^\top$, $\{\mu^q\}, \{\nu^q\}$ being vectors of nonnegative entries, and $\kappa, \{\eta_q\}$ being normalizing constants. $\kappa, \{\eta_q\}$ are some constants to help with numerical stabilities, and are not of theoretical interest.

Most importantly, $A \chi \geq b$ ($A \chi$ is just matrix multiplication) is a reformulation of the constraint $(\tilde x, \tilde X) \in \mathcal{N}_{\text{SDP}}$, and $D_q^\top \chi \geq d_q$ is a reformulation of the constraint $(\tilde x, \tilde X) \in \Gamma_q(\phi_i)$ for any $\phi_i$. This is possible because every constraint is affine after lifting (i.e. introducing $\tilde X$ in the place of $\tilde x \tilde x^\top$). That is, different CGLPs exist for different $\phi_i$s. Moreover, note that given $Q \in \mathbb{Z}$, $\{\xi_q\}_{q=1}^Q$ are calculated according to Proposition \ref{prop:equal_division}.

To illustrate how CGLP can help, we introduce the following theorem, which is a modification of Theorem 1 in \cite{saxena2010convex}:
\begin{theorem}
	\label{thm:valid_cut}
	If the optimal value of (\ref{eqn:cglp}) is negative, then a valid cut $\alpha \chi \geq \beta$ is found that cuts off the candidate solution $(\tilde x^*, \tilde X^*)$. Conversely, if the optimal value is nonnegative, then no such cut exists for $\{\Gamma_q(\phi_i)\}_{q=0}^{Q+1}$.
\end{theorem}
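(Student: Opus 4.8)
The plan is to prove both implications through the Farkas/LP-duality characterization of valid inequalities for a disjunctive set, which is exactly the content of Theorem 3.1 in \cite{balas1998disjunctive}. Write $P_q \triangleq \{\chi : A\chi \geq b, ~ D_q^\top \chi \geq d_q\}$ for the polyhedron encoding $(\tilde x, \tilde X) \in \mathcal{N}_\text{SDP} \cap \Gamma_q(\phi_i)$ after lifting, and call an inequality $\alpha^\top \chi \geq \beta$ a \emph{valid cut} if it holds on the disjunction $\bigcup_q P_q$. By the property noted just before (\ref{eqn:disjuntive_secant})---that every genuinely feasible, rank-one pair lies above exactly one secant line and hence in exactly one $\Gamma_q(\phi_i)$---any such valid cut is automatically a valid inequality for the original certification problem. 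With this reduction, the theorem becomes the assertion that (\ref{eqn:cglp}) is feasible with negative value if and only if a valid cut separating $\chi^*$ exists.

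For the forward direction, suppose (\ref{eqn:cglp}) attains a negative optimal value at $(\alpha^\star, \beta^\star, \{\mu^q\}, \{\nu^q\})$. Separation is immediate: the objective $\alpha^{\star\top}\chi^* - \beta^\star < 0$ is exactly $\alpha^{\star\top}\chi^* < \beta^\star$, so $\chi^*$ violates the produced inequality. Validity follows by nonnegative aggregation of the defining inequalities of $P_q$: fixing any $q$ and any $\chi \in P_q$, the multipliers $\mu^q, \nu^q \geq 0$ give the chain $\alpha^{\star\top}\chi \geq (A^\top\mu^q + D_q^\top\nu^q)^\top \chi = (\mu^q)^\top A\chi + (\nu^q)^\top D_q^\top \chi \geq b^\top\mu^q + d_q^\top\nu^q \geq \beta^\star$, where the first inequality uses the constraint $A^\top\mu^q + D_q^\top\nu^q \leq \alpha^\star$ together with the sign structure of the lifted variables. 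Since this holds for every $q$, the inequality $\alpha^{\star\top}\chi \geq \beta^\star$ is valid on $\bigcup_q P_q$ and cuts off $\chi^*$.

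For the converse, suppose the optimal value is nonnegative, and assume toward a contradiction that some valid cut $\alpha^\top\chi \geq \beta$ satisfies $\alpha^\top\chi^* < \beta$. Here I invoke the completeness half of the disjunctive characterization: every inequality valid on a union of nonempty polyhedra $P_q$ is dominated by one generated through nonnegative multipliers obeying $A^\top\mu^q + D_q^\top\nu^q \leq \alpha$ and $b^\top\mu^q + d_q^\top\nu^q \geq \beta$ for all $q$. Normalizing these multipliers by $\kappa, \{\eta_q\}$---a positive rescaling that multiplies $(\alpha,\beta)$ by a common factor and hence preserves both validity and the sign of $\alpha^\top\chi^* - \beta$---yields a feasible point of (\ref{eqn:cglp}) with objective $\alpha^\top\chi^* - \beta < 0$, contradicting the assumed nonnegativity of the optimum.

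The principal obstacle is the completeness step in the converse: establishing that \emph{every} valid cut, not merely those the CGLP happens to produce, admits such a nonnegative-multiplier representation. This is precisely the nontrivial part of Balas's theorem on the closure of disjunctive sets, which I would cite rather than reprove; the delicate hypotheses are that each $P_q$ be nonempty for the Farkas representation to apply, and that the normalization $\sum_q (\kappa^\top\mu^q + \eta_q^\top\nu^q) = 1$ is what renders (\ref{eqn:cglp}) a bounded linear program while still ranging over all cut directions---without it the scale-invariant cone of valid cuts would force the infimum to $0$ or $-\infty$. A secondary technical point is the passage from $A^\top\mu^q + D_q^\top\nu^q \leq \alpha$ to the aggregated inequality in the forward direction, which must be justified by the nonnegativity constraints already present in $\mathcal{N}_\text{SDP}$ (the post-activation and PSD-diagonal conditions); for any coordinate of $\chi$ that is sign-free, the corresponding CGLP constraint should be read as an equality.
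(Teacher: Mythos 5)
Your proof is correct and takes essentially the same route as the paper's: both directions rest on the characterization of valid inequalities for a disjunction of polyhedra in Theorem 3.1 of \cite{balas1998disjunctive}, with the CGLP feasible set ranging (after normalization) over all valid cuts, so that a negative optimum is equivalent to the existence of a separating valid inequality. If anything, you are more careful than the paper -- the explicit nonnegative-aggregation argument for the forward direction, and the caveats about nonemptiness of each $\Gamma_q(\phi_i)$, the role of the normalization constraint, and reading the multiplier constraint as an equality on sign-free coordinates of $\chi$, are all points the paper's proof silently assumes.
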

\begin{proof}
	According to Theorem 3.1 of \cite{balas1998disjunctive}, $\alpha \chi \geq \beta$ is a valid constraint for (\ref{eq: sdp_relaxed_certification_problem}) with the secant approximations (\ref{eqn:disjuntive_secant}) if and only if the first three lines of the \ref{eqn:cglp} constraints hold true. All alphas and betas for which $\alpha \chi \geq \beta$ is a valid constraint is in the polyhedral search space of CGLP. If there exist $\alpha^*$ and $\beta^*$ such that $\alpha^* \chi^* < \beta^*$, the optimal value of CGLP must be negative. On the other hand, if the optimal value of CGLP is negative, such $\alpha^*$ and $\beta^*$ must exist. In this case, $\alpha^*$ and $\beta^*$ must also meet the condition $\alpha^* \chi \geq \beta^*$ due to the convexity of polyhedra. Then, $\chi^*$ contradicts the requirements of $\chi$, making the constraint $\alpha \chi \geq \beta$ a valid cut.
	
	Conversely, if no such $\alpha^*$ and $\beta^*$ exist, then $\chi^*$ already satisfies all valid constraints with respect to $\{\Gamma_q(\phi_i)\}_{q=0}^{Q+1}$ since the polyhedral search space is convex, and convexity guarantees an exhaustive search. Thus, no valid cut can be found.
\end{proof}

Theorem \ref{thm:valid_cut} states that if the CGLP problem for a particular $\phi_i$ vector returns a negative optimum, then $\alpha \chi \geq \beta$ is a \emph{valid cut}, and can be added to the SDP problem to further strengthen the problem.

Theorem \ref{thm:valid_cut} and Lemma \ref{lemma:infinite_division} lead to the following  result:
\begin{corollary}
	If ${D_k^\top \chi \geq d_k}$ corresponds to a set of constraints $\bigvee_{q \in \{0,\dots ,Q+1\}} (\tilde x, \tilde X) \in \Gamma_q(\phi_i)$ with $\phi_i$ being an eigenvector of $\tilde x^* \tilde x^{*\top}- \tilde X^*$ associated with a positive eigenvalue, then a valid cut $\alpha \chi \geq \beta$ always exists if $Q \rightarrow \infty$.
	\label{corollary:1}
\end{corollary}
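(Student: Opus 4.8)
The plan is to obtain the corollary as a synthesis of the two preceding results: Lemma \ref{lemma:infinite_division} supplies exactness of the disjunction in the limit, and Theorem \ref{thm:valid_cut} converts the resulting infeasibility of the candidate into the existence of a separating cut. First I would record the quantitative violation implied by the spectral hypothesis. If $\phi_i$ is an eigenvector associated with a positive eigenvalue $\lambda$ of the gap matrix $\tilde X^* - \tilde x^* \tilde x^{*\top}$ (this is the matrix certifying the violated negative-definite constraint), then $\langle \tilde X^*, \phi_i \phi_i^\top \rangle - (\phi_i^\top \tilde x^*)^2 = \lambda \|\phi_i\|^2 > 0$, so $(\tilde x^*, \tilde X^*)$ strictly violates the exact constraint $-(\phi_i^\top \tilde x)^2 \leq -\langle \tilde X, \phi_i \phi_i^\top \rangle$. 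In the geometry of Figure \ref{fig: secant_approx} this places the candidate strictly below the parabola.

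Next I would pass to the limit $Q \to \infty$. By Lemma \ref{lemma:infinite_division}, the disjunction $\bigvee_q (\tilde x, \tilde X) \in \Gamma_q(\phi_i)$ collapses onto precisely this exact constraint; concretely, the secant upper bound on $\langle \tilde X, \phi_i \phi_i^\top \rangle$ over the interval containing $\phi_i^\top \tilde x^*$ decreases to $(\phi_i^\top \tilde x^*)^2$, which is strictly below $\langle \tilde X^*, \phi_i \phi_i^\top \rangle$. Thus for all sufficiently large $Q$ the candidate violates the secant inequality of its own interval while failing the range restriction of every other disjunct, so it is infeasible for the entire disjunctive system. Invoking Theorem \ref{thm:valid_cut}, a valid cut $\alpha \chi \geq \beta$ separating $(\tilde x^*, \tilde X^*)$ exists if and only if the optimal value of (\ref{eqn:cglp}) is negative, which holds exactly when the candidate lies outside the convex hull of the disjunctive feasible region.

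The main obstacle is therefore the gap between the candidate being excluded from the disjunctive feasible \emph{set} and being excluded from its convex \emph{hull} --- the two coincide only for convex regions, whereas the limiting set $\{(\tilde x, \tilde X) \in \mathcal{N}_{\text{SDP}} : \langle \tilde X, \phi_i \phi_i^\top \rangle = (\phi_i^\top \tilde x)^2 \}$ sits on the non-convex parabolic surface. A naive projection onto the pair $(\phi_i^\top \tilde x, \langle \tilde X, \phi_i \phi_i^\top \rangle)$ does not suffice, since the convex hull of that projected region is the single chord region of Figure \ref{fig: secant_approx} and is independent of $Q$; a candidate lying between the overall chord and the parabola would then survive. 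To close this I would argue in the full lifted space, showing that the remaining coordinates of $\tilde X$ furnish a separating direction unavailable to the two-dimensional projection, or alternatively exploit that $(\tilde x^*, \tilde X^*)$ is an optimal, hence extremal, solution of the base relaxation, so that it cannot be reproduced as a proper convex combination of limit-feasible points carrying distinct values of $\phi_i^\top \tilde x$. Establishing this separation is the crux; the exactness step and the CGLP duality step are then routine bookkeeping.
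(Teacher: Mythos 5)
Your route is the same one the paper takes --- in fact the paper gives no argument for Corollary~\ref{corollary:1} at all beyond asserting that Theorem~\ref{thm:valid_cut} and Lemma~\ref{lemma:infinite_division} ``lead to'' it --- and your preliminary steps are sound. After silently correcting the paper's sign typo (since $\tilde X^* \succeq \tilde x^* \tilde x^{*\top}$, the matrix $\tilde x^*\tilde x^{*\top}-\tilde X^*$ has no positive eigenvalues; the eigenvector must be taken for $\tilde X^*-\tilde x^*\tilde x^{*\top}$), you correctly note that the violation $\langle \tilde X^*,\phi_i\phi_i^\top\rangle-(\phi_i^\top\tilde x^*)^2=\lambda\|\phi_i\|^2>0$ is a fixed positive quantity while the secant error is $O\bigl((u-l)^2/Q^2\bigr)$ uniformly on $[l,u]$, so for all sufficiently large $Q$ the candidate is infeasible for every disjunct $\Gamma_q(\phi_i)$.

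However, you stop exactly where the real work begins, so the proof is incomplete --- and the gap you name is genuine, not a technicality. A valid cut in the sense of (\ref{eqn:cglp}) must be valid for every polyhedron $P_q=\{\chi: A\chi\ge b,\ D_q\chi\ge d_q\}$, hence for $\operatorname{conv}\bigl(\bigcup_q P_q\bigr)$; so by Theorem~\ref{thm:valid_cut} a cut cutting off $\chi^*$ exists if and only if $\chi^*$ lies outside the \emph{closed convex hull} of the union, which is strictly stronger than lying outside the union. Your projection computation shows why this matters: in the $\bigl(\phi_i^\top\tilde x,\ \langle\tilde X,\phi_i\phi_i^\top\rangle\bigr)$-plane the hull of the union projects onto the single-chord region of Figure~\ref{fig: secant_approx} for \emph{every} $Q$, so refinement buys nothing in that plane, and everything hinges on the remaining coordinates. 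Neither of your proposed repairs is carried out. The first (a separating direction supplied by the other coordinates) is precisely the unproved claim. The second (extremality of $\chi^*$) would indeed close the gap --- if $\chi^*$ were an extreme point of the base feasible set, writing it as a convex combination of points of $\{\chi \in \mathcal{N}_{\text{SDP}} : \langle\tilde X,\phi_i\phi_i^\top\rangle=(\phi_i^\top\tilde x)^2\}$ would force $\chi^*$ onto the parabola, a contradiction --- but SDP solvers return points in the relative interior of the optimal face, which are extreme only when that face is a singleton (this is essentially the role the null-space condition plays in Theorem~\ref{thm:main}, which notably assumes, rather than proves, that some CGLP value is negative). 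Without such an added hypothesis the implication ``infeasible for the disjunction $\Rightarrow$ a cut exists'' is false in general: already in the one-variable toy case ($n_{\tilde x}=1$, $\phi$ the coordinate direction) the base set $\{x^2\le X\le (l+u)x-lu\}$ coincides with the chord region, which equals the hull of the disjunction for every $Q$, so CGLP's optimum is nonnegative no matter how large $Q$ is. So your write-up, like the paper's one-line derivation, leaves the corollary unestablished; to be fair, your analysis exposes this defect explicitly, which the paper never does.
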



\subsection{A Sequential Algorithm}
To systematically generate constraints of the form $\phi_i^\top (\tilde X - \tilde x \tilde x^\top) \phi_i > 0$, we propose Algorithm \ref{alg:main} and study its performance in this part.

\begin{algorithm} [ht]
    \SetKwInOut{Input}{Input}
    \SetKwInOut{Output}{Output}

    \underline{verification} $(\mathcal{X},\mathcal{S},\{W^{0} \dots W^{K-1}\},Q,\text{max}_{\text{iter}},\gamma)$\;
    \Input{Input uncertainty set $\mathcal{X}$, safe set $\mathcal{S}$, trained network weights $\{W^{0} \dots W^{K-1}\}$, number of partitions for the secant approximation, maximum number of iterations $\max_{iter}$, and the eigenvalue threshold $\gamma$.}
    \Output{$\tau_r = \max {C_r^\top z} $ for $r \in \{1, \dots, R\}$, where $C_r$ is the $r^{\text{th}}$ row of $C$, and R is the number of rows of $C$, as specified by the safe set $\mathcal{S}$}
    \For{$c = C_1, \dots, C_R$ }{
    $(\tilde x^*, \tilde X^*) \leftarrow \argmax\{c^\top z : (x,z) \in\mathcal{N}_\text{SDP}, ~ x \in\mathcal{X} \}$ \;
    $\tilde \alpha = [], \tilde \beta = []$ \;
    \For{$i = 1,\dots, \text{max}_{\text{iter}}$ }{
    $(\lambda \in \mathbb{R}^{m},V \in \mathbb{R}^{n_{\tilde x} \times m}) \leftarrow$ eigenvalues of $\tilde x^* \tilde x^{*^\top}-\tilde X^*$ bigger than $\gamma$, and with eigenvector corresponding to the $i^{\text{th}}$ eigenvalue being $i^{\text{th}}$ column of $V$ \;
    \For{$\phi = V_1, \dots, V_m$}{
    $(l,u) \leftarrow \{\min,\max\} \{\phi^\top z : (x,z) \in\mathcal{N}_\text{SDP}, ~ x \in\mathcal{X} \}$ \;
    $(\alpha,\beta) \leftarrow$ CGLP$(Q,\phi,u,l)$\;
    \If{$\alpha^\top \chi^* < \beta$}
      {
        $\tilde \alpha = [\tilde \alpha; \alpha^\top], \tilde \beta = [\tilde \beta; \beta]$
      }
	}
	$(\tilde x^*, \tilde X^*) \leftarrow \argmax\{c^\top z : (x,z) \in\mathcal{N}_\text{SDP}, \tilde \alpha \chi \geq \tilde \beta, ~ x \in\mathcal{X} \}$ \;
	}
	$\tau_r = c^\top \tilde x^*[K]$\;}
    \caption{Sequential SDP verification of NN Robustness by adding secant-approximated non-convex cuts}
    	\label{alg:main}
    \end{algorithm}
    
    
The main result of this paper is stated below.
\begin{theorem}
	For every iteration $i \in \{2,\dots, \max_{\text{iter}}\}$ in Algorithm \ref{alg:main}, it holds that
	\begin{equation}
		f^*(\mathcal{X}) \leq c^\top \tilde x^*_{i}[K] \leq c^\top \tilde x^*_{i-1}[K] \leq \hat f^*(\mathcal{X})
	\end{equation}
	with $\tilde x^*_{i}, \tilde X^*_i$, and by extention $\Xi^*_i$ being the optimizers of the $i^{\text{th}}$ iteration. The middle inequality becomes strict if at least one of the optimal values of CGLPs in iteration $i$ is negative and either of the following holds:
	\begin{itemize}[noitemsep,topsep=0pt]
		\item $\hat c$ lies in the null space of $\Xi_{i-1}$, where $\hat c \triangleq [0_{1 \times (n_{\tilde x}-n_{K})} \ c^\top]^\top$
		\item $Q \rightarrow \infty$
	\end{itemize}
	\label{thm:main}
\end{theorem}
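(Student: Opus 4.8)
The plan is to split the statement into a chain of inequalities that follows from feasibility containments and a strictness claim that forces me to control the entire optimal face. Write $\mathcal{F}_i$ for the feasible set of the $i$-th solve in the lifted variables $\chi=(\tilde x,\tilde X)$, namely $(x,z)\in\mathcal{N}_{\text{SDP}}$, $x\in\mathcal{X}$, together with every cut $\tilde\alpha\chi\ge\tilde\beta$ accumulated through iteration $i$. Since cuts are only appended across iterations, $\mathcal{F}_i\subseteq\mathcal{F}_{i-1}\subseteq\mathcal{F}_0$, where $\mathcal{F}_0$ is the plain relaxation (\ref{eq: sdp_relaxed_certification_problem}); maximizing the common linear objective $c^\top\tilde x[K]$ over nested sets yields $c^\top\tilde x^*_i[K]\le c^\top\tilde x^*_{i-1}[K]\le\hat f^*(\mathcal{X})$ at once, which is the middle and right inequalities. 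For the left inequality I would prove that the exact set $\mathcal{F}^{\mathrm{ex}}=\{(\tilde x,\tilde X)\in\mathcal{N}_{\text{SDP}}:\tilde X=\tilde x\tilde x^\top,\ x\in\mathcal{X}\}$ lies inside every $\mathcal{F}_i$: each appended inequality is a \emph{valid} cut in the sense of Theorem~\ref{thm:valid_cut}, hence is implied by the secant disjunction (\ref{eqn:disjuntive_secant}); any $\chi\in\mathcal{F}^{\mathrm{ex}}$ satisfies $\langle\tilde X,\phi\phi^\top\rangle=(\phi^\top\tilde x)^2$ for every $\phi$, so it sits on the parabola, meets the disjunction, and therefore satisfies all cuts. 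Thus the true maximizer is never removed and $f^*(\mathcal{X})\le c^\top\tilde x^*_i[K]$.

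For strictness, the hypothesis that some CGLP in iteration $i$ is negative gives, by Theorem~\ref{thm:valid_cut}, a cut $\alpha^\top\chi\ge\beta$ with $\alpha^\top\chi^*_{i-1}<\beta$, so $\chi^*_{i-1}\notin\mathcal{F}_i$. The step I expect to be the main obstacle is that deleting the old optimizer need not lower the optimum, because the relaxation may admit an entire optimal face. I would set up a contradiction: if $c^\top\tilde x^*_i[K]=c^\top\tilde x^*_{i-1}[K]=:v$, then $\chi^*_i\in\mathcal{F}_i\subseteq\mathcal{F}_{i-1}$ also attains $v$, so $\chi^*_{i-1}$, $\chi^*_i$, and by convexity the whole segment $\chi(t)=(1-t)\chi^*_{i-1}+t\chi^*_i$ lie in the optimal face $F_{i-1}$. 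Along this segment the directional gap $g(t)=\langle\tilde X(t),\phi\phi^\top\rangle-(\phi^\top\tilde x(t))^2$ is \emph{concave} in $t$ (linear in $\tilde X$ minus a convex quadratic in $\tilde x$) with $g(0)>0$, since $\phi$ is a strictly positive-gap eigenvector of $\tilde X^*_{i-1}-\tilde x^*_{i-1}\tilde x_{i-1}^{*\top}$. This computation is precisely what exposes the obstacle: a single separating cut kills $\chi(0)$ but the value $v$ may persist at $\chi(1)$, so strictness must come from collapsing $F_{i-1}$ to a point.

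This is where the two hypotheses enter, and each is meant to force that collapse. When $Q\to\infty$, Lemma~\ref{lemma:infinite_division} upgrades the disjunction to the exact constraint $\langle\tilde X,\phi\phi^\top\rangle\le(\phi^\top\tilde x)^2$; together with the SDP inequality $\langle\tilde X,\phi\phi^\top\rangle\ge(\phi^\top\tilde x)^2$ this pins $g\equiv0$ on $\mathcal{F}_i$, and by Corollary~\ref{corollary:1} the same holds along \emph{every} positive-gap eigenvector of $\chi^*_{i-1}$ simultaneously. Concavity with $g(0)>0$ and $g(1)=0$ then gives $g(t)\ge(1-t)g(0)>0$ on $[0,1)$, so no interior face point survives into $\mathcal{F}_i$; I would then argue that an endpoint $\chi^*_i$ attaining $v$ while carrying zero gap in all the directions where $\chi^*_{i-1}$ had strictly positive gap cannot sit on the same optimal face as $\chi^*_{i-1}$ without collapsing it, which yields the contradiction. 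For the null-space alternative I would rewrite $\hat c\in\mathrm{null}(\Xi^*_{i-1})$, via a Schur-complement identity on $\Xi^*_{i-1}$, as $(\tilde X^*_{i-1}-\tilde x^*_{i-1}\tilde x_{i-1}^{*\top})\hat c=0$, a rank-one alignment of the objective block, and combine it with SDP complementary slackness $\Xi^*_{i-1}S^*=0$ to show the objective supports $F_{i-1}$ at a unique point, so that removing the maximizer strictly lowers $v$. The delicate point in both cases---and the part I would spend the most care on---is exactly the passage from ``the incumbent is separated'' to ``the optimal face collapses,'' since the concavity of $g$ controls interior face points but it is the two extra hypotheses that rule out a surviving far endpoint.
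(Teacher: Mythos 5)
Your treatment of the inequality chain is correct and is essentially the paper's argument: the feasible sets are nested because cuts are only appended, and every exact point $\tilde X=\tilde x\tilde x^\top$ satisfies the secant disjunction and hence every CGLP-generated cut, so $f^*(\mathcal{X})$ remains a lower bound. The genuine gap is in the strictness claim, and it is precisely the step you yourself flag as delicate: you never actually pass from ``$\chi^*_{i-1}$ is separated'' to ``the optimal value strictly decreases.'' Your concavity computation along the segment $\chi(t)$ shows only that the \emph{interior} of the segment violates the exact ($Q\to\infty$) cut; this produces no contradiction, since $\mathcal{F}_i$ is under no obligation to contain those interior points. The contradiction must come from excluding the far endpoint $\chi^*_i$ itself, i.e., from showing that \emph{no} point of the optimal face of problem $i-1$ can have zero gap in all cut directions; your statement that such an endpoint ``cannot sit on the same optimal face without collapsing it'' is a restatement of the required conclusion, not a proof of it. The same holds for the null-space case: your Schur-complement reading of the hypothesis as $(\tilde X^*_{i-1}-\tilde x^*_{i-1}\tilde x_{i-1}^{*\top})\hat c=0$ is correct, but the claim that complementary slackness forces the supporting hyperplane to meet the feasible set at a unique point is again asserted rather than derived.

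The ingredient you are missing is the one the paper's proof is built on: the facial geometry of spectrahedra. The paper notes that the optimal set is the exposed face cut out by the supporting hyperplane $\hat c^\top\tilde x=\gamma$ (Proposition 2 of \cite{roshchina2017face}), and invokes the result of \cite{ramana1995some} that $\mathcal{N}(\Xi)$ is constant over the relative interior of this face. This is exactly what ties the surviving optimizer's $\Xi_i$ to $\Xi_{i-1}$, which your segment argument never does. Concretely: for a feasible point, $\phi^\top(\tilde X-\tilde x\tilde x^\top)\phi=0$ iff $[-\phi^\top\tilde x,\ \phi^\top]^\top\in\mathcal{N}(\Xi)$; if a face point shared the null space $\mathcal{N}(\Xi_{i-1})$ and had zero gap along $\phi$, the first row of $\Xi_{i-1}[-\phi^\top\tilde x,\ \phi^\top]^\top=0$ would force $\phi^\top\tilde x=\phi^\top\tilde x^*_{i-1}$ and hence zero gap at $\chi^*_{i-1}$ itself, a contradiction --- so under $Q\to\infty$ the recovered constraint removes the whole (relative interior of the) face, and under the null-space condition the equations $\hat\psi_j^\top\tilde x=-\omega_j$ together with $\hat c\in\mathcal{N}(\Xi_{i-1})$ pin the face down to a single point, so removing $\chi^*_{i-1}$ removes the entire optimal set. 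Without a statement of this kind relating $\mathcal{N}(\Xi)$ across the optimal face, neither of your two cases closes, so the proposal as written does not prove the strictness assertion.
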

\begin{proof}
	The inequalities $c^\top \tilde x^*_{i}[K] \leq c^\top \tilde x^*_{i-1}[K] \leq \hat f^*(\mathcal{X})$ are due to the fact that a strict reduction in the search space will not yield a higher optimal value. $f^*(\mathcal{X})$ serves as a lower bound since the constraints are valid, according to Theorem \ref{thm:valid_cut}.
	
	Now, since the objective is linear, $\hat c^\top \tilde x = \gamma$ must be a supporting hyperplane for the spectrahedral (convex body arising from linear matrix inequalities) feasible set at $\tilde x_{i-1}^*$, where $\gamma$ denotes as the constant $c^\top \tilde x_{i-1}^*[K]$. By Proposition 2 in \cite{roshchina2017face}, the intersection of a closed convex set (e.g., this feasible set) and the supporting hyperplane is an exposed face. Furthermore, by \cite{ramana1995some}, it is known that any exposed face of a spectraheron is a proper face, and the null space of $\Xi$ will stay constant over the relative interior of this face. Thus, the intersection will consist of points $(\tilde x, \tilde X)$ such that $\hat c^\top \tilde x = \gamma$ and $\mathcal{N}(\Xi) = \mathcal{N}(\Xi_{i-1})$. 
	
	Let a basis of $\mathcal{N}(\Xi_{i-1})$ be denoted as $\{\psi_j\}$, where each vector is partitioned as $\psi_j \triangleq [\omega_j \ \ \hat \psi_j^\top]^\top$. Therefore, $\hat \psi_j^\top \tilde x = -\omega_j$ for $j \in \{1,\dots, n_{\tilde x}+1\}$. If $\hat c \in \mathcal{N}(\Xi_{i-1})$, the supporting hyperplane will intersect the spectrahedron at exactly one point (the face is of affine dimension 0) because $\tilde x$ is already fixed in $\{\hat \psi_j\}$ coordinates, and $\gamma$ can only take on one value. If the intersection only consists of one point, then a valid cut will invalidate this point, and the objective value will strictly decrease.
	
	On the other hand, if $Q \rightarrow \infty$, the original negative semidefinite constraint is recovered, as per Lemma \ref{lemma:infinite_division}. Then, after using the valid cut, $\Xi_i$ will have a strictly lower rank than $\Xi_{i-1}$. Since the intersection consists of points such that $\mathcal{N}(\Xi) = \mathcal{N}(\Xi_{i-1})$, all those points will not be in the feasible space anymore under the presence of this valid cut. This leads to a strict decrease in objective value.
\end{proof}
Since $n_{\tilde x}$ is usually very large for multilayer networks, the first condition is often satisfied. However, it is important to note that those 2 conditions are only sufficient conditions, and that in practice there is normally a non-zero reduction in relaxation gap whenever a valid cut is calculated. Moreover, if a valid cut is given, the algorithm will always reach new optimal points, thus avoiding the situation of repeating the same search again and again.

This algorithm is asymptotically exact if an infinite number of iterations is taken, as explained below.

\begin{lemma}
	The relation $c^\top \tilde x^*_{i}[K] = f^*(\mathcal{X})$ holds as $i \rightarrow \infty, $ provided that $\gamma = 0$ and $Q \rightarrow \infty$
	\label{lemma:asymptotic_recovery}
	\end{lemma}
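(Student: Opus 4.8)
The plan is to convert the monotone objective sequence produced by Algorithm~\ref{alg:main} into a statement about the relaxation gap and then drive that gap to zero using the two hypotheses. By Theorem~\ref{thm:main} the sequence $\{c^\top \tilde x^*_i[K]\}_{i}$ is non-increasing and bounded below by $f^*(\mathcal{X})$, hence it converges to some $L \ge f^*(\mathcal{X})$, and it suffices to show $L = f^*(\mathcal{X})$. I would track the gap matrix $G_i \triangleq \tilde X^*_i - \tilde x^*_i(\tilde x^*_i)^\top$, which satisfies $G_i \succeq 0$ because $\Xi^*_i \succeq 0$, and reduce the claim to showing $G_i = 0$ for all large $i$.

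The key step is that, under $\gamma = 0$ and $Q \to \infty$, every iteration with a nonzero gap produces a cut that strictly lowers the rank of $\Xi^*$. If $G_i \neq 0$, then since $G_i \succeq 0$ it has a strictly positive eigenvalue, i.e. a direction $\phi$ along which the excluded constraint $\phi^\top(\tilde X - \tilde x\tilde x^\top)\phi \le 0$ is violated; because the eigenvalue threshold is $\gamma = 0$, this $\phi$ is selected in the inner loop. By Corollary~\ref{corollary:1}, as $Q \to \infty$ a valid cut separating the current point along such a $\phi$ always exists, so by Theorem~\ref{thm:valid_cut} the associated CGLP attains a negative optimal value and the cut is appended. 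Invoking the second bullet of Theorem~\ref{thm:main} (the $Q \to \infty$ case), appending this valid cut yields $\rank(\Xi^*_{i}) < \rank(\Xi^*_{i-1})$.

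Next I would iterate this observation: as long as the gap stays nonzero the integer $\rank(\Xi^*_i)$ strictly decreases, and since it is bounded below by $1$ (the leading entry of $\Xi$ equals $1$, so $\Xi \neq 0$), the gap must reach $G_i = 0$ after finitely many steps, i.e. $\tilde X^*_i = \tilde x^*_i(\tilde x^*_i)^\top$. At such an iterate the lifting is exact: the diagonal constraint in~(\ref{eq: one_layer_relu_constraint_sdp}) collapses to $\tilde x^*_i[k]_j\big(\tilde x^*_i[k]_j - (W^{[k-1]}\tilde x^*_i[k-1])_j\big) = 0$, which together with $\tilde x^*_i[k] \ge 0$ and $\tilde x^*_i[k] \ge W^{[k-1]}\tilde x^*_i[k-1]$ recovers the genuine ReLU relation, so $(\tilde x^*_i[0],\tilde x^*_i[K])$ is a true feasible pair $(x,f(x))$ with $x \in \mathcal{X}$. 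Hence $c^\top \tilde x^*_i[K] \le f^*(\mathcal{X})$, and combined with $c^\top \tilde x^*_i[K] \ge f^*(\mathcal{X})$ from Theorem~\ref{thm:main} this gives equality; by monotonicity the value persists for all larger $i$, so $L = f^*(\mathcal{X})$.

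The main obstacle is justifying strict progress at every iteration with a nonzero gap, and this is exactly where both hypotheses are indispensable. The condition $\gamma = 0$ guarantees that no violated direction is ever discarded, so the inner loop acts on some positive-eigenvalue direction whenever $G_i \neq 0$; were $\gamma > 0$, the process could stall at an iterate whose residual gap has only eigenvalues below $\gamma$, leaving $L > f^*(\mathcal{X})$. The condition $Q \to \infty$ is needed twice: via Lemma~\ref{lemma:infinite_division} it makes the secant disjunction recover the exact non-convex constraint, so the cut is genuinely valid and shaves off no true feasible point, and via Corollary~\ref{corollary:1} together with the rank argument of Theorem~\ref{thm:main} it guarantees that each cut actually drops the rank rather than merely trimming the feasible set; with finite $Q$ the false feasible region (the red area of Figure~\ref{fig: secant_approx}) would survive and the limiting iterate could fail to be exact. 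The one technical point I would state carefully is that the strict rank decrease is inherited directly from Theorem~\ref{thm:main}, so the argument is in fact finite-terminating and the ``$i \to \infty$'' limit is attained.
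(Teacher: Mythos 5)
Your proof is correct and follows essentially the same route as the paper: the paper's own proof is a one-line citation of Theorem~\ref{thm:valid_cut}, Theorem~\ref{thm:main}, and Corollary~\ref{corollary:1}, and your argument is exactly the detailed unwinding of those three results (monotonicity and validity from Theorem~\ref{thm:main}, existence of a separating cut from Corollary~\ref{corollary:1} together with Theorem~\ref{thm:valid_cut}, and the rank-decrease/finite-termination step taken from the proof of the $Q \to \infty$ bullet of Theorem~\ref{thm:main}), plus the exactness-at-zero-gap observation that recovers the true ReLU equations. The only point worth flagging is that you silently correct a sign slip in the paper --- the algorithm and Corollary~\ref{corollary:1} refer to positive eigenvalues of $\tilde x^* \tilde x^{*\top} - \tilde X^*$, which is negative semidefinite; the intended matrix is $\tilde X^* - \tilde x^* \tilde x^{*\top}$, which is what your gap matrix $G_i$ uses.
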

\begin{proof}
	The proof follows directly from Theorems \ref{thm:valid_cut}, \ref{thm:main}, and Corollary \ref{corollary:1}.
\end{proof}

\section{Experiments}

We certify the robustness of the IRIS dataset%
\footnote{https://archive.ics.uci.edu/ml/datasets/iris} with pre-trained MLP classification network with 99\% accuracy on test data using the original SDP approach, our Algorithm \ref{alg:main}, and the penalization approach in Section \ref{sec:penalization}. For all experiments, $\text{max}_{\text{iter}}$ is set to 10. Moreover, the $l_{\infty}$ radius of $\mathcal{X}$ is set to 0.15 for networks with 5 and 10 hidden layers, and set to 0.075 for the other 2 networks. This is because larger networks are naturally more sensitive to perturbation.

\subsection{Certification Percentage}

\vspace{2mm}

\begin{table}[ht]
\centering
\caption{Certification Percentage (First Row) and Average Trace Gap (Second Row).}
\label{table:summary_percent}
\begin{tabular}{cccc}

\multicolumn{1}{l}{H-Layers/Q} & \multicolumn{1}{l}{SDP} & \multicolumn{1}{l}{Algorithm \ref{alg:main}} & \multicolumn{1}{l}{Penalized} \\ \hline
5, Q=20                          & 100\%                     & 100\%                      & 100\%                           \\ (Trace Gap)
                           & $4.73 \times 10^{-7}$   & $4.73 \times 10^{-7}$    & 0.48                          \\ 
10, Q=5                          & 0\%                     & 80\%                      & 0\%                           \\(Trace Gap)
                           & 31.15   & 27.09    & 20.42                        \\ 
15, Q=5                          & 0\%                     & 60\%                      & 0\%                           \\(Trace Gap)
                           & 491.71   & 61.1    & 228.68                          \\ 
15, Q=20                         & 0\%                     & 100\%                      & 0\%                           \\(Trace Gap)
                           & 513.71   & 70.05    & 198.79                        
\end{tabular}

\end{table}

In Table \ref{table:summary_percent}, certification percentage and average trace gaps ($\tr(\tilde X)-\tilde x^\top \tilde x$) for the original SDP procedure, our Algorithm \ref{alg:main}, and the penalized approach are all calculated for classification networks with different numbers of hidden layers. Trace gap is an alternative measure for the rank of $\tilde X$, since many non-zero eigenvalues can be really small, and it is not obvious whether we should count them when calculating rank.

It can be observed that when the network is small, the original SDP procedure is already sufficient. However, as the number of hidden layers grows, Algorithm \ref{alg:main} shows its strength by certifying more data points. The penalization approach remains ineffective although sometimes it decreases the trace gap. 

It should be noted that when a small Q cannot generate enough valid cuts (in this case Q=5), increasing that partition number (Q=20) will generally improve the performance, as can be noticed in the 2 cases with 15 hidden layers. This is also an empirical verification of Lemma~\ref{lemma:asymptotic_recovery}.

\begin{figure}[ht]
    \centering
    \begin{subfigure}{0.24\textwidth}
    	    \includegraphics[width=\linewidth,height=0.7\linewidth]{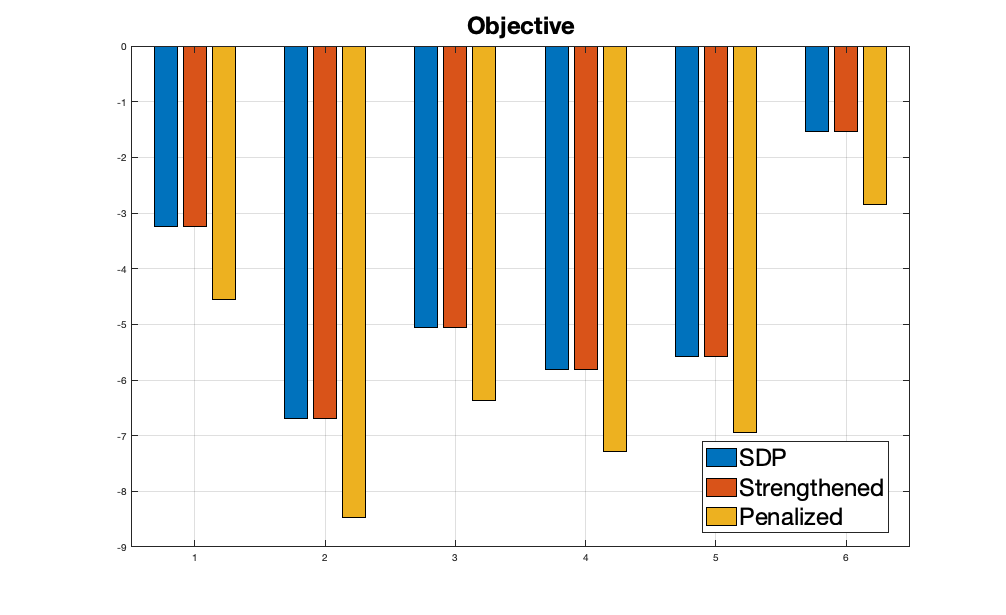}
    \caption{5 H-Layers, Q=20}
    \end{subfigure} \hfill
    \begin{subfigure}{0.24\textwidth}
    	    \includegraphics[width=\linewidth,height=0.7\linewidth]{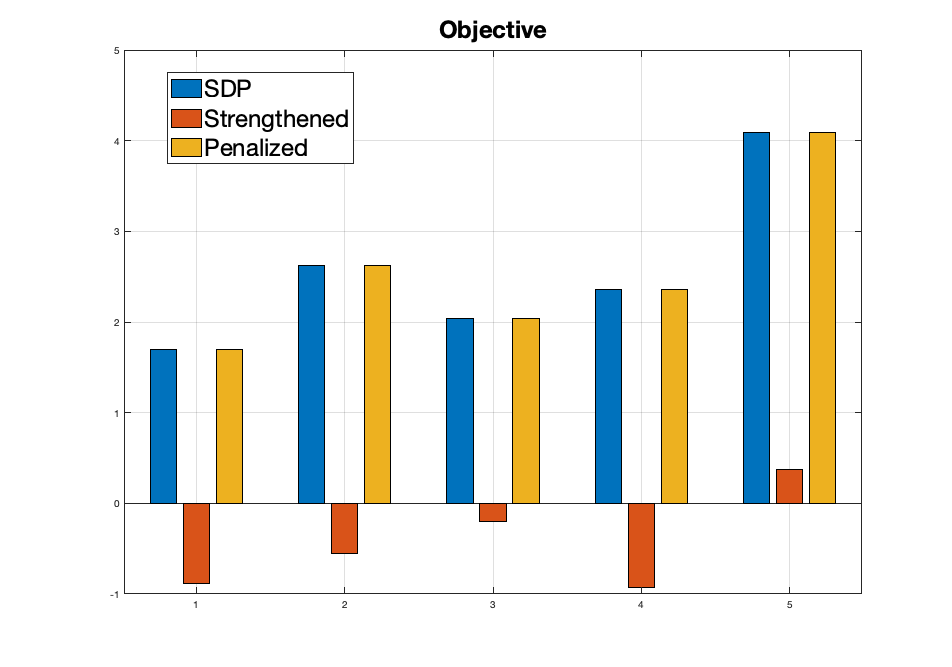}
    \caption{10 H-Layers, Q=5}
    \label{subfig:10_h}
    \end{subfigure}
    \vskip\baselineskip
    \begin{subfigure}{0.24\textwidth}
    	    \includegraphics[width=\linewidth,height=0.7\linewidth]{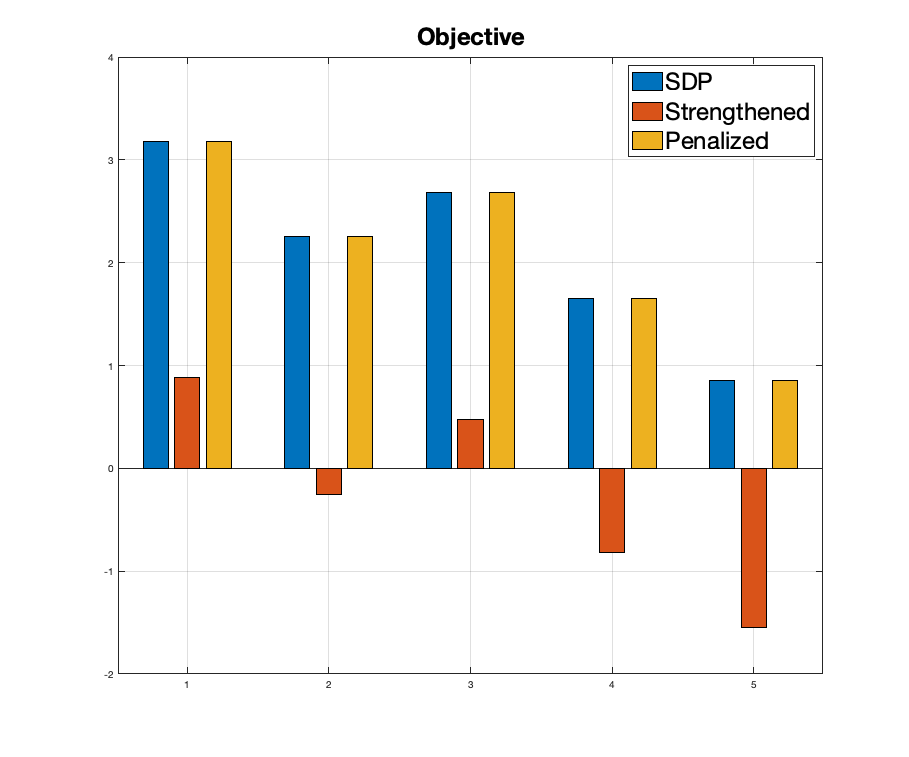}
    \caption{15 H-Layers, Q=5}
    \end{subfigure}\hfill
    \begin{subfigure}{0.24\textwidth}
    	    \includegraphics[width=\linewidth,height=0.7\linewidth]{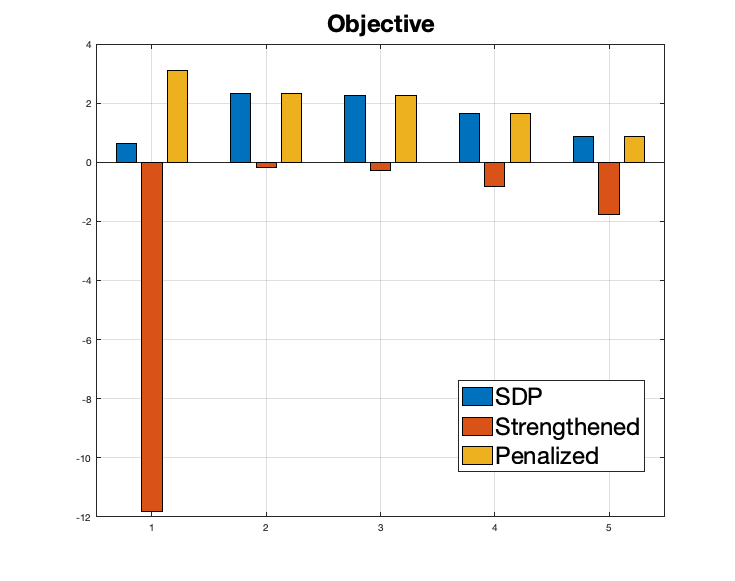}
    \caption{15 H-Layers, Q=20}
    \end{subfigure}
	\caption{$\hat f_i^*(\mathcal{X})$ for different certification methods and different networks}
	\label{fig:obj}
\end{figure}

The objective $\hat f_i^*(\mathcal{X})$ of (\ref{eq: sdp_relaxed_certification_problem}) for the aforementioned approaches is shown in Figure \ref{fig:obj} for 5 data points each.

\subsection{Running time}
The running time of Algorithm \ref{alg:main} is problem-specific, because the number of constraints that needs to be added is dependent on how loose the candidate solution is, and it is apparent from Figure \ref{fig:obj} that even for similar points in a small-scale dataset, the differences are large.

Most importantly, the most time consuming part of this algorithm is CGLP, because after linearization into vectors, the dimension of the LP is large. However, since the LP is sparse, one may use specialized methods to handle the LP in a more time-efficient fashion. 

Below is an illustration of the runtime of Algorithm \ref{alg:main} for the 5 datapoints shown in Figure \ref{subfig:10_h}. It is apparent that the CGLP procedure takes well over half of the time in the entire algorithm. 

However, since LP naturally scales much better than SDP and the complexity can be as low as linear in the size of the problem in presence of favorable sparsity, this algorithm can be easily scaled to larger datasets if specialized solvers of CGLP can be developed to handle the special structures.

Although Algorithm \ref{alg:main} is an exponential time algorithm in the worst case, as most of the proofs only work when $Q \rightarrow \infty$, the previous section has demonstrated that we can achieve satisfactory results even with small values of $Q$, making it also empirically appealing.  

\begin{table}[ht]
\caption{Running time of Algorithm \ref{alg:main} with 10 H-Layers.}
\begin{tabular}{ccc}
\multicolumn{1}{l}{Datapoint} & \multicolumn{1}{l}{Algorithm \ref{alg:main} runtime (s)} & \multicolumn{1}{l}{CGLP runtime (s)} \\ \hline
1                          & 166                     & 100   \\
2                          & 149                     & 88   \\ 
3                          & 143                     & 85   \\                                        
4                          & 149                     & 90   \\                                        
5                          & 165                     & 99   
                                                                                        
\end{tabular}
\label{table:summary}
\end{table}
%

\section{Geometric Analysis of Non-convex Cuts}
\label{sec:geometric_analysis}
In this section, we offer a geometric intuition into the success of non-convex cuts. We revisit (\ref{eqn:disjuntive_secant}) and write it as:
\begin{equation}
	\begin{aligned}
	\xi_q \xi_{q+1} - (\phi^\top \tilde x) (\xi_q + \xi_{q+1}) \leq -\langle \tilde X, \phi \phi^\top \rangle  \\
	\Leftrightarrow \langle \tilde X, \phi \phi^\top \rangle \leq (\phi^\top \tilde x -\xi_q)(\xi_{q+1}-\phi^\top \tilde x) + (\phi^\top \tilde x)^2
	\label{eqn:non-convex_cut}
\end{aligned}
\end{equation}

Since there always exists a partition number $Q$ large enough such that $(\phi^\top \tilde x -\xi_q)(\xi_{q+1}-\phi^\top \tilde x) \leq \epsilon$, it is possible to rewrite equation (\ref{eqn:non-convex_cut}):
\begin{equation}
\begin{aligned}
	\sum_{i,j \in \{n_{\tilde x}\} \times \{n_{\tilde x}\}, i \neq j} (\phi^i \phi^j)\|\vec x_i \| \|\vec x_j\| cos(\theta_{ij}) \leq \\
	\sum_{i,j \in \{n_{\tilde x}\} \times \{n_{\tilde x}\}, i \neq j} (\phi^i \phi^j)\|\vec x_i \| \|\vec x_j\| cos(\theta_i)cos(\theta_j) + \epsilon
\end{aligned}
\end{equation}
after substituting equation (\ref{eqn:gram_factorization}). Here, $\phi^i$ is the $i^{\text{th}}$ entry of $\phi$ (to be distinguished from $\phi_i$, which is a vector in the set of basis), and $\theta_{ij}$ is the angle between the vectors $\vec x_i, \vec x_j$, and $\theta_i$ denotes the angle between $\vec e$ and $\vec x_i$ for $i \in \{1,\dots,n_{\tilde x}\}$.

Therefore, there must exist a pair $(i,j)$ such that
\begin{equation}
\begin{aligned}
 cos(\theta_{ij}) \leq cos(\theta_i)cos(\theta_j) + \frac{\epsilon}{n_{\tilde x}}
\end{aligned}
\label{eqn:angle_ineq}
\end{equation}
Since $n_{\tilde x}$ is usually large in multi-layer networks and $\epsilon$ is chosen to be very small, $\frac{\epsilon}{n_{\tilde x}}$ can be neglected for practical purposes.

Recall from Section \ref{sec:tightness} that $\theta_i = 0 \ \forall i \in \{1,\dots,n_{\tilde x}\}$ is both sufficient and necessary for the tightness of the relaxation. Furthermore, in the following lemma we will show that any decrease in $\theta_i$ for any $i$ can contribute to a tighter relaxation.

\begin{lemma}
\label{lemma:decrease_relax_bound}
	For any row vector $\vec x$ in $V$, as part of the factorization of $\tilde X$, any increase in the lower bound for $\frac{\vec x \cdot \vec e}{\|\vec x\|}$ will decrease the upper bound on the relaxation gap for at least one entry in $\tilde x$. 
\end{lemma}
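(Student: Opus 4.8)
The plan is to translate the statement into the geometry of the Gram factorization (\ref{eqn:gram_factorization}) and reduce it to a monotonicity property of a constrained maximization. First I would identify, for the row $\vec x = \vec x_i$ of $V$, the relaxation gap of the corresponding entry of $\tilde x$ with the diagonal entry of $\tilde X - \tilde x \tilde x^\top$, namely
\[
(\tilde X - \tilde x \tilde x^\top)_{ii} = \|\vec x_i\|^2 - (\vec e \cdot \vec x_i)^2 = \|\vec x_i\|^2 \sin^2 \theta_i = \|\vec x_i^\perp\|^2,
\]
where $\vec x_i^\perp$ is the component of $\vec x_i$ orthogonal to $\vec e$ and I have used $\|\vec e\| = 1$ together with $\tilde x_i = \vec e \cdot \vec x_i$. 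This makes explicit that the per-entry gap is governed entirely by how far $\vec x_i$ departs from the $\vec e$-axis, and that the quantity $\frac{\vec x_i \cdot \vec e}{\|\vec x_i\|} = \cos\theta_i$ controls it. Fixing attention on this single coordinate already establishes the ``at least one entry'' clause.

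Next I would recover the feasible region for $\vec x_i$. The last inequality in (\ref{eq: one_layer_relu_constraint_sdp}) reads $\|\vec x_i\|^2 \leq (l_i+u_i)\tilde x_i - l_i u_i$, which, after writing $\vec x_i = (\vec e \cdot \vec x_i)\vec e + \vec x_i^\perp$, becomes the ball constraint $\|\vec x_i^\perp\|^2 \leq (\tilde x_i - l_i)(u_i - \tilde x_i)$; that is, $\vec x_i$ lies in the ball $B_i$ of radius $R_i = \frac{1}{2}(u_i - l_i)$ centered at $\bar c_i\,\vec e$ with $\bar c_i = \frac{1}{2}(l_i+u_i)$, consistent with Section \ref{sec:tightness}. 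The upper bound on the gap of entry $i$ is therefore the (squared) cap height
\[
\hat g_i(c) \triangleq \sup\{\|\vec x_i^\perp\|^2 : \vec x_i \in B_i,\ \tfrac{\vec x_i \cdot \vec e}{\|\vec x_i\|} \geq c\},
\]
where $c$ is the lower bound in question and the angular constraint $\cos\theta_i \geq c$ carves a cone out of the origin.

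The core step is a monotonicity argument. Since the cone $\{\vec x_i : \cos\theta_i \geq c\}$ is nested decreasingly in $c$ (a larger floor $c' > c$ yields a subset), the feasible set $B_i \cap \{\cos\theta_i \geq c\}$ shrinks as $c$ grows, so $\hat g_i(c)$ is non-increasing; this already gives the weak form of the claim. To obtain the strict decrease I would pass to the two-dimensional picture in $\mathrm{span}(\vec e, \vec x_i)$, which is legitimate because both the objective $\|\vec x_i^\perp\|^2$ and the constraints are rotationally symmetric about the $\vec e$-axis. There the problem is a disk-versus-wedge maximization, and solving the line–circle intersection shows that once $c$ exceeds the value $c_0 = \bar c_i / \sqrt{\bar c_i^2 + R_i^2}$ subtended by the top of the disk, the maximizer moves onto the boundary $\cos\theta_i = c$ and the cap height becomes a strictly increasing function of the cone's half-angle $\arccos(c)$, hence strictly decreasing in $c$.

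The main obstacle is the strictness: establishing that the supremum is attained on the cone boundary (the constraint is active) rather than in the interior, and that the resulting cap height then moves strictly. I expect to handle this by noting that for ReLU layers $l_i \geq 0$, so $\bar c_i > 0$ and the cone is well-defined with positive projection, and that once the wedge binds the feasible regions are strictly nested, excluding the previous maximizer. For $c \leq c_0$ the angular bound is inactive and $\hat g_i$ stays at $R_i^2$, so the strict decrease is precisely the binding regime --- which is the operative case when cuts raise $\cos\theta_i$ toward $1$, i.e., toward the tightness condition $\theta_i = 0$ of Section \ref{sec:tightness}.
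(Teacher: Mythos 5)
Your proposal proves monotonicity of the wrong quantity, and this is a genuine gap rather than a stylistic difference. You identify ``the relaxation gap for entry $i$'' with the diagonal slack $(\tilde X - \tilde x \tilde x^\top)_{ii} = \|\vec x_i^\perp\|^2$, but that is not the bound the lemma refers to. In this paper (Section \ref{sec:tightness}, following Section 4 and Lemma 6.1 of \cite{zhang2020tightness}), the upper bound on the relaxation gap of an entry is the height of the spherical cap generated by the \emph{ReLU} constraint $\diag(\tilde X[C_k]) = \diag(W\tilde X[B_k])$: in Gram coordinates, $\vec x_j$ lies on the sphere having the pre-activation vector $\vec \chi_j = \sum_i w_{ij}\vec x_i$ as diameter, and the cap height $\tfrac{\|\vec \chi_j\|}{2}\bigl(1-\cos(\theta)\bigr)$, with $\theta$ the angle between $\vec e$ and $\vec \chi_j$, bounds how far $\tilde x_j = \vec e \cdot \vec x_j$ can exceed $\relu(\vec e \cdot \vec \chi_j)$. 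The diagonal slack does not bound this value gap: on the ReLU sphere one has $\tilde x_j(\tilde x_j - \vec e\cdot\vec\chi_j) = \vec x_j^\perp\cdot\vec\chi_j^\perp - \|\vec x_j^\perp\|^2$, so the gap in the entry's value can be large even when $\|\vec x_j^\perp\|$ is tiny, provided $\|\vec\chi_j^\perp\|$ is large and $\tilde x_j$ is small. What you establish is monotonicity of the per-entry \emph{trace gap} (the proxy used in the experiments), which vanishes together with the true gap at tightness but is not an upper bound on it, so the lemma does not follow.

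Because you work entry-by-entry inside the interval-bound ball, you also miss the mechanism that carries the paper's proof: the angular bound on a row $\vec x$ matters because $\vec\chi$ is a \emph{linear combination} of such rows, so raising the lower bound on $\frac{\vec x \cdot \vec e}{\|\vec x\|}$ raises the lower bound on $\frac{\vec \chi \cdot \vec e}{\|\vec \chi\|}$, which (at fixed or bounded $\|\vec\chi\|$) strictly shrinks the cap $\tfrac{\|\vec\chi\|}{2}\bigl(1-\cos(\theta)\bigr)$ of an entry in the \emph{next} layer. This propagation step is what connects the lemma to inequality (\ref{eqn:angle_ineq}) and the surrounding analysis in Section \ref{sec:geometric_analysis}. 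It also delivers a strict decrease for \emph{any} increase in the angular lower bound, whereas your own argument concedes a flat regime $c \le c_0 = \bar c_i/\sqrt{\bar c_i^2 + R_i^2}$ in which your bound remains $R_i^2$ and nothing decreases; so even for your substitute quantity, the claim ``any increase \ldots will decrease'' fails outside the binding regime. To repair the proof you would need to (i) replace $\|\vec x_i^\perp\|^2$ by the cap height of an entry whose pre-activation involves $\vec x$, and (ii) add the linear-combination argument transferring the angular bound from $\vec x$ to $\vec \chi$.
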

\begin{proof}
	Extending the results from Section \ref{sec:tightness}, it can be verified   that for a fixed $\theta$, the angle between $\vec e$ and $\vec \chi$, a decrease in $\|\vec \chi\|$ will lead to a smaller spherical cap. Now, consider a fixed $\|\vec \chi\|$. It can be shown that the height of the spherical cap is $\frac{\|\vec \chi\|}{2}(1-cos(\theta))$. Therefore, any increase in $cos(\theta)$ will lead to a smaller cap. Since $\vec \chi$ is just a linear combination of different $\vec x$, increasing the lower bound for $\frac{\vec x \cdot \vec e}{\|\vec x\|}$ will also increase the lower bound for $\frac{\vec \chi \cdot \vec e}{\|\vec \chi\|}$.
\end{proof}

Given any such pair $(i,j)$, if $cos(\theta_{ij})$ is large, namely possibly close to one, the term $cos(\theta_i)cos(\theta_j)$ will have to be larger under the constraint $cos(\theta_{ij}) \leq cos(\theta_i)cos(\theta_j) + \frac{\epsilon}{n_{\tilde x}}$. This means that $cos(\theta_i) \approx 1$ and $cos(\theta_j) \approx 1$, making both angles very small. Without this constraint, it is possible for $\vec x_j, \vec x_i$ to be collinear, but not collinear with $\vec e$, thus making them have large angles $\theta_i$ and $\theta_j$. Figure \ref{fig: different_angles} graphically showcases this difference.
\begin{figure}[ht]
    \centering
    \begin{subfigure}{0.2\textwidth}
    	\begin{tikzpicture}[thick,scale=1, every node/.style={scale=1}]
  	\draw[->,very thick] (-1.5, 0) -- (1.5, 0) ;
  	\draw[->,very thick] (0, -0.5) -- (0, 2) node[above] {$\vec e$};
  	\coordinate (origin) at (0,0);
  	\coordinate (i) at (1,2);
  	\coordinate (j) at (-1,1.5);
  	\coordinate (up) at (0,2);
  \draw[->,semithick] (0,0)--(1,2) node[right] {$\vec x_i$};
  \draw[->,semithick] (0,0)--(-1,1.5) node[left] {$\vec x_j$};
   \pic [draw, <->, "$\theta_i$", angle eccentricity=1.5] {angle = i--origin--up};
   \pic [draw, <->, "$\theta_j$", angle eccentricity=1.5] {angle = up--origin--j};
   \pic [draw, <->, "$\theta_{ij}$", angle radius = 1.2cm, angle eccentricity=1.3] {angle = i--origin--j};
  
	\end{tikzpicture}
	\caption{\scalebox{0.8}{$cos(\theta_{ij}) \leq cos(\theta_i)cos(\theta_j)$}}
\end{subfigure} \hspace{\textwidth}
\begin{subfigure}{0.2\textwidth}
    	\begin{tikzpicture}[thick,scale=1, every node/.style={scale=1}]
  	\draw[->,very thick] (-1.5, 0) -- (1.5, 0) ;
  	\draw[->,very thick] (0, -0.5) -- (0, 2) node[above] {$\vec e$};
  	\coordinate (origin) at (0,0);
  	\coordinate (i) at (1,1.5);
  	\coordinate (j) at (2,0.2);
  	\coordinate (up) at (0,2);
  \draw[->,semithick] (0,0)--(1,1.5) node[right] {$\vec x_j$};
  \draw[->,semithick] (0,0)--(2,0.2) node[right] {$\vec x_i$};
   \pic [draw, <->, "$\theta_j$", angle eccentricity=1.5] {angle = i--origin--up};
   \pic [draw, <->, "$\theta_i$", angle radius = 1.3cm, angle eccentricity=1.2] {angle = j--origin--up};
   \pic [draw, <->, "$\theta_{ij}$", angle radius = 0.8cm, angle eccentricity=1.3] {angle = j--origin--i};
  
	\end{tikzpicture}
	\caption{\scalebox{0.8}{$cos(\theta_{ij}) \geq cos(\theta_i)cos(\theta_j)$}}
\end{subfigure}
    \caption{Illustration of possible configurations of $\vec x_i$, $\vec x_j$ given a fixed $\theta_{ij}$}
    \label{fig: different_angles}
\end{figure}
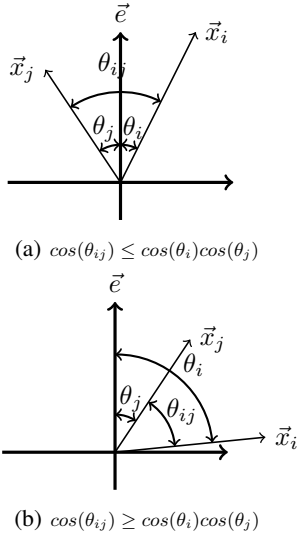
In particular, since ReLU only outputs positive values, we have $\theta_i \in [0,\frac{\pi}{2}]$ for all $i \in \{1,\dots,n_{\tilde x}\}$. However, the vectors $\vec x_i$ that correspond to the input layer are exceptions. Without loss of generality, it is possible to also assume these to be nonnegative by changing the weight matrix of the first layer. Therefore, $cos(\theta_i) \geq 0 \ \forall i \in \{1,\dots,n_{\tilde x}\}$. Thus, when $cos(\theta_i)cos(\theta_j) \approx 1$, we have $\theta_i \approx 0$ and $\theta_j \approx 0$. Without this constraint, it is possible that $\theta_j \approx \pi$ and $\theta_i \approx \pi$. In other words, with the ReLU constraints in place, the non-convex cuts will push the vectors towards $\vec e$ instead of $-\vec e$.

If $cos(\theta_{ij})$ is small, then inequality (\ref{eqn:angle_ineq}) will not be binding, and the original semidefinite constraints will work satisfactorily. Ideally, inequality (\ref{eqn:angle_ineq}) will hold for all pairs $(i,j)$, but this is not guaranteed. However, under the assumption that the weight matrices in the neural network are relatively well conditioned (no large condition number) and that $\epsilon$ is negligible, a large number of the pairs $(i,j)$ are expected to satisfy or approximately satisfy inequality (\ref{eqn:angle_ineq}). 

As explained, only those pairs $(i,j)$ with a relatively large $cos(\theta_{ij})$ are of interest. Denote the threshold as $\rho \in [0.5,1)$ such that $cos(\theta_{ij}) \geq \rho$. We call a pair $(i,j)$ to be valid if it satisfies inequality $cos(\theta_{ij}) \leq cos(\theta_i)cos(\theta_j) + \frac{\epsilon}{n_{\tilde x}}$ and invalid otherwise. In the worst case, $cos(\theta_i)cos(\theta_j)-cos(\theta_{ij}) = 1 -\rho$ for all valid pairs and $cos(\theta_i)cos(\theta_j)-cos(\theta_{ij}) = -\varrho < 0$ for all invalid pairs. This leads to the case where the number of valid pairs  is the smallest (denoted as $\vartheta_1$) and the number of invalid pairs is the largest (denoted as $\vartheta_2$). The other threshold $\varrho$ exists because a number $cos(\theta_{ij})$ too close to $cos(\theta_i)cos(\theta_j)$ approximately satisfies the inequality. Under the assumption that weight matrices are well conditioned, it implies that $\|\vec x_i \| \|\vec x_j\|$s are approximately the same amongst all $(i,j)$ pairs. Thus, if $\epsilon$ is negligible:
	\begin{equation}
	\begin{aligned}
	\sum_{i,j} (\phi^i \phi^j)\|\vec x_i \| \|\vec x_j\| (cos(\theta_i)cos(\theta_j)-cos(\theta_{ij})) \geq 0 \\
	\implies \vartheta_1(1-\rho) - \vartheta_2(\varrho) \approx 0 \implies \frac{\vartheta_1}{\vartheta_2} \approx \frac{\varrho}{1-\rho}
	\end{aligned}
	\end{equation}
	If $\rho=0.8$ and $\varrho = 0.2$, then at least half of all pairs such that $cos(\theta_{ij}) \geq \rho$ will be valid. As $\rho$ increases, so does $\frac{\vartheta_1}{\vartheta_2}$. This means that almost all of the most important pairs $(i,j)$ are valid.

Furthermore, the nature of a multi-layer setup means that if $\vec x_i$ and $\vec x_j$ are from different layers of the network, they depend on each other. Figure \ref{fig: different_angles} shows that the non-convex cut tends to make $\vec x_i$ and $\vec x_j$ lie on different sides of $\vec e$ in the event of a large $cos(\theta_{ij})$. This means that as vectors build on each other (see Figure 1 in \cite{raghunathan2018semidefinite}), they will revolve around $\vec e$, instead of branching out into a certain direction. This further implies a small angle with $\vec e$ for all $\vec x$.

\section{Conclusion}
This paper studies the problem of neural network verification, for which the existing cut based techniques to reduce relaxation gap fail to provide meaningful improvement in accuracy. We leverage the fact that loose candidate optimum points in an SDP relaxation of a QCQP reformulation of the problem can be invalidated by simple linear constraints. Using this property, a SDP-based method is developed, which reduces the relaxation gap to zero as the number of iterations increases. This algorithm is proven to be theoretically and empirically effective.

By actively constructing constraints using CGLP, we obtain provably valid cuts. More importantly, a negative optimal value in CGLP also guarantees a nonzero reduction in the relaxation gap. It is verified that the relaxation gap for the existing methods is large when tested on large-scale networks, while the proposed algorithm offers a satisfactory performance. 

\bibliography{references}

\begin{thebibliography}{}

\bibitem[Anderson et~al., 2020]{anderson2020tightened}
Anderson, B.~G., Ma, Z., Li, J., and Sojoudi, S. (2020).
\newblock Tightened convex relaxations for neural network robustness
  certification.
\newblock {\em arXiv preprint arXiv:2004.00570}.

\bibitem[Anderson et~al., 2021]{anderson2021partition}
Anderson, B.~G., Ma, Z., Li, J., and Sojoudi, S. (2021).
\newblock Partition-based convex relaxations for certifying the robustness of
  relu neural networks.
\newblock {\em arXiv preprint arXiv:2101.09306}.

\bibitem[Balas, 1998]{balas1998disjunctive}
Balas, E. (1998).
\newblock Disjunctive programming: Properties of the convex hull of feasible
  points.
\newblock {\em Discrete Applied Mathematics}, 89(1-3):3--44.

\bibitem[Luo et~al., 2019]{luo2019enhancing}
Luo, H., Bai, X., and Peng, J. (2019).
\newblock Enhancing semidefinite relaxation for quadratically constrained
  quadratic programming via penalty methods.
\newblock {\em Journal of Optimization Theory and Applications},
  180(3):964--992.

\bibitem[Raghunathan et~al., 2018]{raghunathan2018semidefinite}
Raghunathan, A., Steinhardt, J., and Liang, P.~S. (2018).
\newblock Semidefinite relaxations for certifying robustness to adversarial
  examples.
\newblock In {\em Advances in Neural Information Processing Systems}, pages
  10877--10887.

\bibitem[Ramana and Goldman, 1995]{ramana1995some}
Ramana, M. and Goldman, A.~J. (1995).
\newblock Some geometric results in semidefinite programming.
\newblock {\em Journal of Global Optimization}, 7(1):33--50.

\bibitem[Roshchina, 2017]{roshchina2017face}
Roshchina, V. (2017).
\newblock Face of convex sets.

\bibitem[Saxena et~al., 2010]{saxena2010convex}
Saxena, A., Bonami, P., and Lee, J. (2010).
\newblock Convex relaxations of non-convex mixed integer quadratically
  constrained programs: extended formulations.
\newblock {\em Mathematical programming}, 124(1-2):383--411.

\bibitem[Sherali and Adams, 2013]{sherali2013reformulation}
Sherali, H.~D. and Adams, W.~P. (2013).
\newblock {\em A reformulation-linearization technique for solving discrete and
  continuous nonconvex problems}, volume~31.
\newblock Springer Science \& Business Media.

\bibitem[Singh et~al., 2019]{singh2019beyond}
Singh, G., Ganvir, R., P{\"u}schel, M., and Vechev, M. (2019).
\newblock Beyond the single neuron convex barrier for neural network
  certification.
\newblock In {\em Advances in Neural Information Processing Systems}, pages
  15098--15109.

\bibitem[Sonoda and Murata, 2017]{sonoda2017neural}
Sonoda, S. and Murata, N. (2017).
\newblock Neural network with unbounded activation functions is universal
  approximator.
\newblock {\em Applied and Computational Harmonic Analysis}, 43(2):233--268.

\bibitem[Tjeng et~al., 2017]{tjeng2017evaluating}
Tjeng, V., Xiao, K., and Tedrake, R. (2017).
\newblock Evaluating robustness of neural networks with mixed integer
  programming.
\newblock {\em arXiv preprint arXiv:1711.07356}.

\bibitem[Wong and Kolter, 2017]{wong2017provable}
Wong, E. and Kolter, J.~Z. (2017).
\newblock Provable defenses against adversarial examples via the convex outer
  adversarial polytope.
\newblock {\em arXiv preprint arXiv:1711.00851}.

\bibitem[Zhang, 2020]{zhang2020tightness}
Zhang, R.~Y. (2020).
\newblock On the tightness of semidefinite relaxations for certifying
  robustness to adversarial examples.
\newblock {\em arXiv preprint arXiv:2006.06759}.

\end{thebibliography}

\end{document}